\pdfoutput=1
\documentclass[11pt]{article}

\usepackage[final]{acl}

\usepackage{times}
\usepackage{latexsym}
\usepackage{dsfont}
\usepackage[T1]{fontenc}
\usepackage[utf8]{inputenc}

\usepackage{microtype}

\usepackage{inconsolata}

\usepackage{graphicx}
\usepackage{subcaption}

\usepackage{ISG_style}
\usepackage{amsmath}
\usepackage{amssymb}

\usepackage{algorithm}
\usepackage{algorithmic}

\usepackage[toc,page,header]{appendix}
\usepackage{minitoc}

\usepackage[dvipsnames]{xcolor}

\newcommand{\algoname}{{\rm PGB-CT}}

\usepackage{booktabs}   % For nicer horizontal rules
\usepackage{multirow}   % For multi-row cells if needed
\usepackage{diagbox}  % For diagonal split cells
\usepackage{makecell} % Helps with custom cell height
\usepackage{caption}    % Optional, for caption formatting
\usepackage{adjustbox}  % Optional, to help scale a wide table

\usepackage{enumitem}

\usepackage{balance}

\title{Multi-component Causal Tracing in Large Language Models}

\author{
 \textbf{Zirui Yan\textsuperscript{1}},
 \textbf{Dennis Wei\textsuperscript{2}},
 \textbf{Dmitriy A. Katz\textsuperscript{2}},
 \textbf{Prasanna Sattigeri\textsuperscript{2}},
 \textbf{Ali Tajer\textsuperscript{1}}
\\
\\
 \textsuperscript{1}Rensselaer Polytechnic Institute,
 \textsuperscript{2}IBM Research
}

\begin{document}
\maketitle

%%%%%%%%%%%%%%%%%%%%%%%%%%%%%%%%%%%%%%%%%%%%%%
%% Abstract
%%%%%%%%%%%%%%%%%%%%%%%%%%%%%%%%%%%%%%%%%%%%%%
\begin{abstract}
Causal tracing systematically intervenes on a large language model's (LLM's) internal representations to uncover and quantify the causal pathways linking specific inputs or computations to specific metrics of interest, quantifying the LLM's behavior. Building on previous single-component or single-layer studies, this paper presents a unified framework for causally tracing multiple components simultaneously. This framework systematically identifies the subsets of components (e.g., attention heads and multi-layer perceptron neurons) most critical to a desired target performance metric (e.g., accuracy and fairness). This is achieved by incorporating flexible interventions applied to a wide range of desired metrics. To address the combinatorial complexity of the multi-component problem, an efficient algorithm is designed that leverages soft interventions and a carefully designed metric transformation, converting the combinatorial search problem into a continuous one that can be solved efficiently under proper constraints, thereby generating proper binary decisions for selecting components. Experimental results demonstrate that the proposed method efficiently identifies subsets of the model's components that have a high impact on the target metric, outperforming existing baseline approaches. Our code is available at \url{https://github.com/ZiruiYan/multi-component-causal-tracing}.
\end{abstract}
%%%%%%%%%%%%%%%%%%%%%%%%%%%%%%%%%%%%%%%%%%%%%%
% Introduction
%%%%%%%%%%%%%%%%%%%%%%%%%%%%%%%%%%%%%%%%%%%%%%
\section{Introduction}
Large language models (LLMs) have revolutionized natural language processing through their ability to generate fluent, contextually-aware text, with a wide range of applications such as machine translation~\citep{zhang2023prompting}, text summarization~\citep{van2024adapted}, code generation~\citep{jiang2024self}, and agentic tasks~\citep{xi2025rise}. However, LLMs are prone to various forms of safety risks. They can inadvertently learn and propagate societal biases~\citep{sheng2021societal}, produce factually incorrect statements~\citep{augenstein2024factuality}, and even generate harmful or deceptive content through jailbreak attacks~\citep{wang2025align}. These challenges, coupled with the fundamental interest in revealing how LLM knowledge flows, highlight the need for interpretability capabilities to systematically analyze and elucidate the underlying mechanisms of these models.

\begin{figure*}[htbp]
    \centering
    \includegraphics[width=0.4\textwidth]{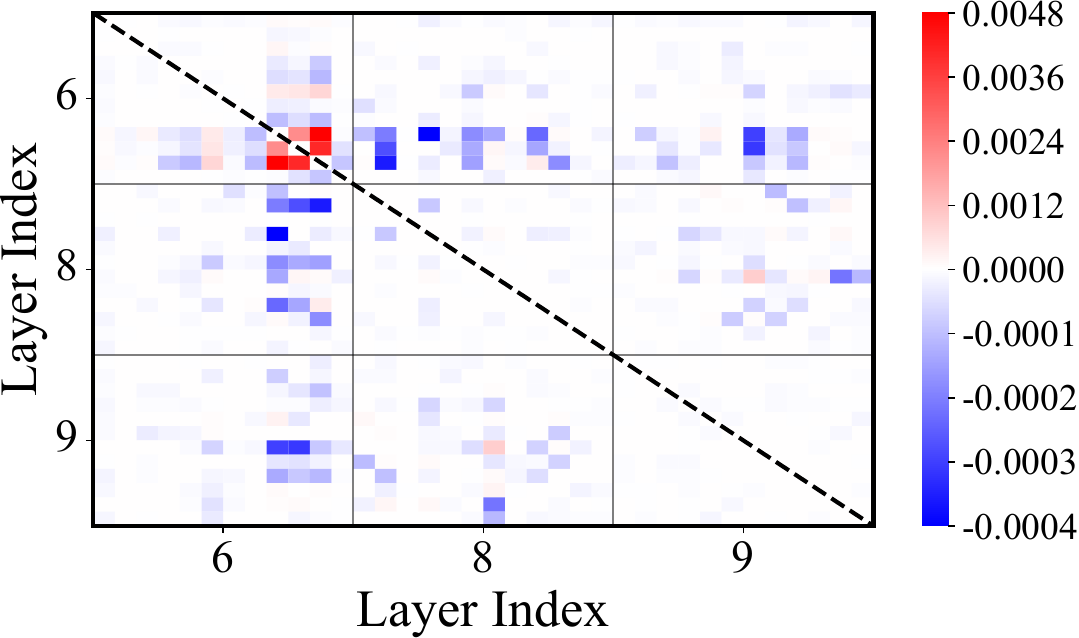}\hspace{0.45 in}
    \includegraphics[width=0.4\textwidth]{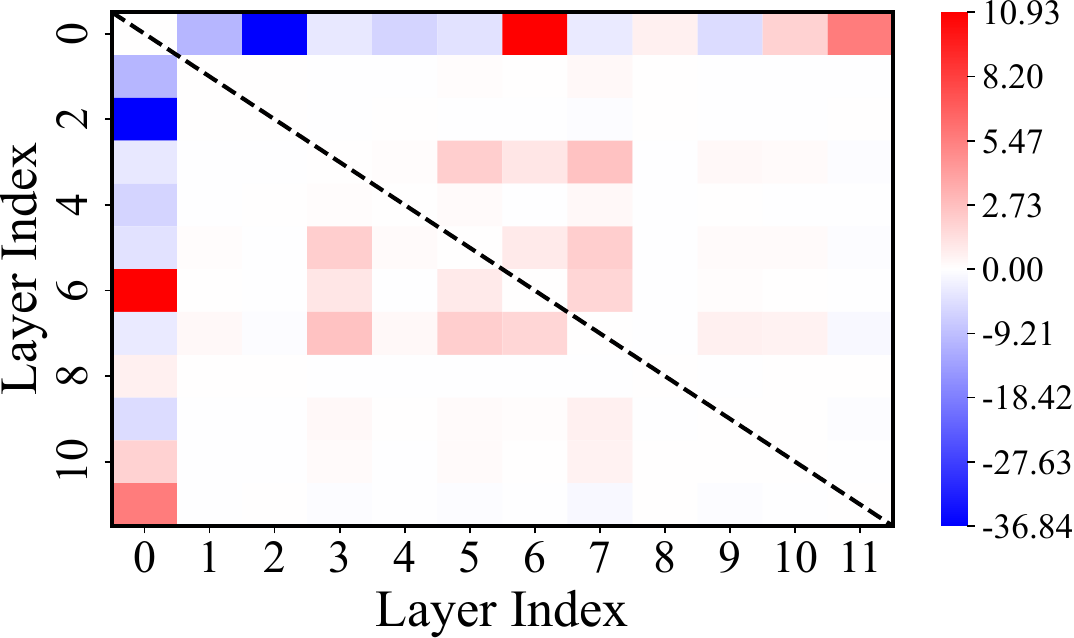} 
    \vspace{-0.05 in}
    \caption{The deviation from linearity due to intervening on two attention heads within layers $\{6,8,9\}$ of GPT2-small on the WinoBias dataset (left), and two MLP layers of GPT2-small on the Professions dataset (right).}
    \label{fig:results_toy}
    \vspace{-0.15 in}
\end{figure*} 

Several interpretability approaches have been presented recently~\citep{luo2024understanding}, ranging from external explanations that focus on the impact of the prompt on a single prediction (e.g., gradient-based~\citep{sikdar-etal-2021-integrated} and vector-based~\citep{chen-etal-2020-generating-hierarchical,modarressi-etal-2022-globenc, paes2025mexgen}), to internal explanations that aim to uncover general knowledge and representational structures within a model (e.g., probing~\citep{petroni2019language,li2024inference}, circuit analysis~\citep{chughtai2023toy,wang2023interpretability, conmy2023towards}, and sparse autoencoders~\citep{bricken2023monosemanticity, cunningham2023sparse}). 

Among the global methods, \emph{causal tracing}~\citep{vig2020investigating}, which is also called activation patching~\citep{davies2023discovering} or interchange interventions~\citep{geiger-etal-2020-neural,geiger2022inducing}, has emerged as an effective strategy by treating LLMs as causal models, enabling systematic interventions on internal values to pinpoint which specific model components most strongly influence a chosen metric. This goes beyond counterfactual explanations for black-box models~\citep{wachter2017counterfactual,karimi2022survey}, and recent work has proposed different methods~\citep{mohebbi-etal-2023-quantifying} and metrics~\citep{zhang2024towards}. By revealing the importance of specific network components, causal tracing enables more targeted editing, fine-tuning, or enhancement of LLMs~\citep{geiger2021causal, meng2022locating,meng2022mass,cai2024locating,hase2024does, chen2026can, prakash2024fine}. For example, \citet{cai2024locating} uses causal mediation analysis to identify model components that contribute most to gender bias and then mitigate gender bias by intervening on those components; \citet{chen2026can} show that tracing-guided editing can rival fine-tuning and in-context editing; \citet{prakash2024fine} show that fine-tuning improves performance by strengthening existing circuits rather than creating new ones through causal tracing.

Most existing studies in causal tracing have focused on analyzing single components in isolation: an individual neuron~\citep{vig2020investigating,vig2020causal} or an entire layer~\citep{meng2022locating,meng2022mass,hase2024does}. This overlooks the potentially \emph{non-linear} effects arising from interactions among multiple components. Emergent multi-component behaviors have been revealed in circuit analysis. \citet{elhage2021mathematical} and~\citet{olsson2022context} discovered the \emph{induction heads} mechanism, according to which two attention heads in different layers jointly increase the likelihood of a specific token. We refer to~\citet{ferrando2024primer} for additional findings.

A phenomenon similar to induction heads can also be observed in the causal tracing of LLMs, in which jointly intervened-on pairs of attention heads or multi-layer perceptron (MLP) neurons exhibit a pronounced \emph{non-linear} impact on the model's outputs. Figure~\ref{fig:results_toy} illustrates this phenomenon in GPT2-small on two datasets, where selecting specific subsets of components can significantly boost the metric of interest (which is discussed in detail in Appendix~\ref{sec:toy}). The figure shows the metric values obtained by intervening on two components simultaneously, minus the sum of the metric values obtained when intervening on each component individually. Red indicates a positive difference (the joint intervention enhances the metric), whereas blue indicates a negative difference (the joint intervention diminishes the metric). These nonlinear effects violate the observation of \citet{syed2023attribution, hanna2024have,shah2024decomposing} for these specific models and metrics, implying that linear approximations do not hold in general.

Motivated by these observations, we argue that causal tracing in LLMs should be pursued through a \emph{multi-component} perspective. Although existing studies provide valuable evidence, they lack a unified framework for defining the problem and rigorously accounting for non-linear interactions. We address this gap by formulating the problem of \emph{multi-component causal tracing}, in which the goal is to identify a subset of components (e.g., attention heads, MLP neurons) that collectively maximize a chosen metric of interest. Recent work by \citet{davies2023discovering} investigated a similar problem via gradient-based search.
However, our analysis in Appendix~\ref{sec:DCM} indicates that their penalty and optimization choices drive much of the performance gap. The relative efficiency and effectiveness of gradient-based algorithms compared with alternative methods remain unknown. We fill these gaps, and our main contributions are 
as follows.
\begin{itemize}[leftmargin=0.15in, itemsep=0.01in, topsep=0.02in]
    \item \textbf{Unified Framework.} We formalize a causal tracing approach that systematically traces causal effects across multiple components of LLMs. This framework provides general definitions of intervention and metrics that can be tailored for various tasks, including probing linguistic features, testing bias, and monitoring factual correctness. The multi-component causal tracing problem poses a combinatorial challenge, resulting in a search space that grows exponentially with model size.\vspace{-.1 in}
    \item \textbf{Efficient Algorithm.} To efficiently search in the combinatorial space, we perform \emph{soft} interventions on components, combined with a 
    carefully designed
    metric transformation and scheduled penalty function. The transformation unlocks the power of gradient descent, while the penalty function ensures sparsity and binary values. Furthermore, the scheduling in the penalty function allows the algorithm to converge to the desired sparsity level at an appropriate rate and to stop once that level is achieved.
    \item \textbf{Empirical Efficiency and Effectiveness.} We show that top-$k$ and greedy baselines can be computationally expensive, as they require computing multiple forward runs, at least one for each component on all samples. In our experiments on GPT2-medium with the WinoGender dataset, our method achieves a $ 1.76$-times speedup compared to top-$k$ and a $ 229$-times speedup compared to greedy, while maintaining performance close to greedy and superior to top-$k$.
\end{itemize}

\begin{figure*}
    \centering
    \includegraphics[width=0.85\linewidth]{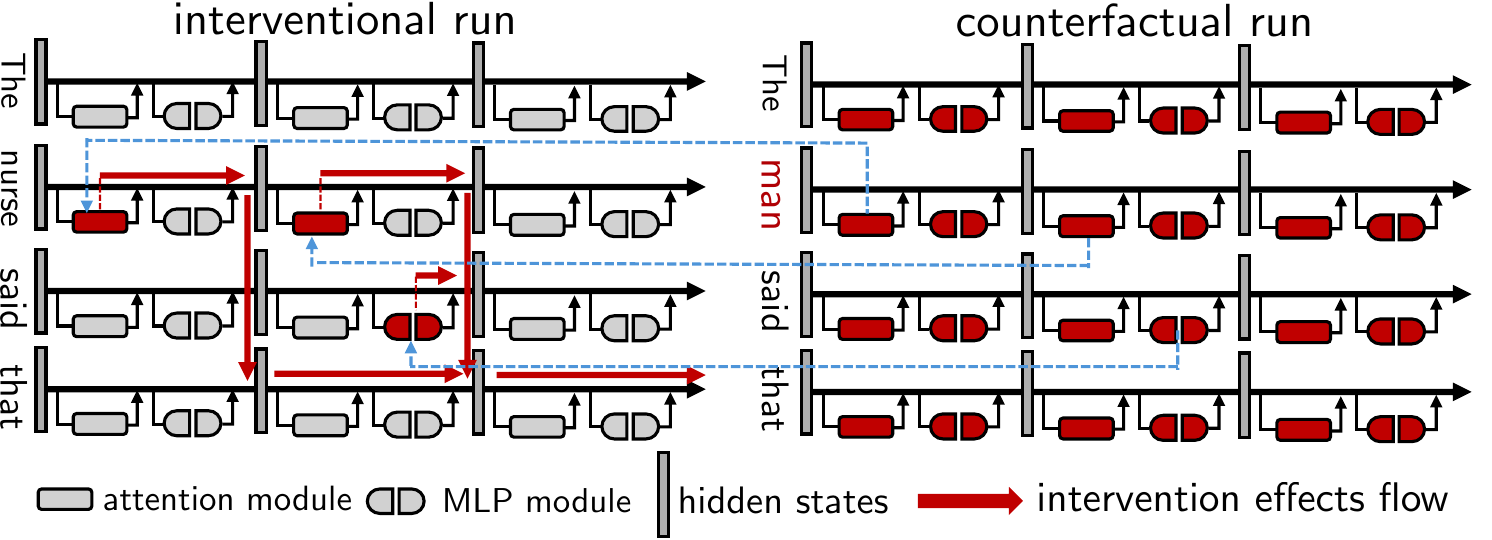}
    \vspace{-0.05 in}
    \caption{Counterfactual intervention in LLMs.}
    \label{fig:toy_structure}
    \vspace{-0.15 in}
\end{figure*}

%%%%%%%%%%%%%%%%%%%%%%%%%%%%%%%%%%%%%%%%%%%%%%
% Problem definition
%%%%%%%%%%%%%%%%%%%%%%%%%%%%%%%%%%%%%%%%%%%%%%%%%%%%%%%%%%%%%%%%%%%%%%%%%%%%%%%%%%%%%%%%%%%%
% LLM
\section{LLM Model \& Notations}
\label{sec:llm}
\noindent\textbf{Pre-trained LLM.} Consider a pre-trained LLM $f_\theta$, where $\theta$ compactly parameterizes the LLM. The model receives the prompt $\bs$, operates over the vocabulary set $\mcV$, and generates words sequentially. The word generated at time $t$ is denoted by $v_t$, and accordingly, we define the sequence generated up to $t$ as the ordered sequence $\bv^{t}\triangleq \{v_i\in\mcV: i\in[t]\}$. Word generation begins with an initial \emph{ordered} sentence of $n$ words denoted by $\bv^n$ and proceeds by generating subsequent tokens sequentially and autoregressively for indices $t>n$. Specifically, at time $t$, the token is generated according to 
\begin{equation} 
\label{eq:greedy} 
\setlength\abovedisplayskip{8pt}
   \setlength\belowdisplayskip{8pt}
v_t \; \triangleq \; \arg\max_{v \in \mathcal{V}} \P(v \mid \bv^{t-1}) \ , 
\end{equation}
where $\P(v \mid \bv^{t-1})$ is an \emph{unknown} conditional probability mass function with the following shorthand:
\begin{equation}
\label{eq:conditional} 
\setlength\abovedisplayskip{8pt}
   \setlength\belowdisplayskip{8pt}
    \P_t(v) \; \triangleq \; \P(v \mid \bv^{t-1})\ , \quad  \forall \; v\in\mcV \ .
\end{equation}
This token-generation process continues sequentially until a predefined stopping criterion is met, such as the generation of a predetermined end-of-sequence token.

In designing our algorithm presented in Section~\ref{sec:algorithm}, we leverage the average likelihood of the tokens generated over specified intervals. For this purpose, we define the following mixture distribution, which for any given $t,d\in\N$, averages the conditional probability mass functions (PMFs) over $\{t,\dots, t+d\}$, i.e., 
\begin{equation}
\setlength\abovedisplayskip{8pt}
   \setlength\belowdisplayskip{8pt}
    \P^{\sf avg}_{t:d}(\bv^{t+d}) \; \triangleq \;\frac{1}{d+1} \sum_{i=0}^{d} \P_{t+i}(v_{t+i}) \ .
\end{equation}
The conditional PMFs can be aggregated in different ways, such as by the geometric mean.

%%%%%%%%%%%%%%%%%%%%%%%%%%%%%%%%%%%%%%%%%%%%%%
% Components of LLM
\vspace{0.04 in}
\noindent\textbf{LLM Components.}  LLMs are constructed by stacking multiple computational blocks. Each block consists of several key components, including attention heads, feed-forward layers,  MLP layers, and normalization layers. We denote the set of components available for our analysis by
\begin{equation}
   \setlength\abovedisplayskip{8pt}
   \setlength\belowdisplayskip{8pt}
    \mcC \; \triangleq \; \{c_i: i \in [N]\} \ .
\end{equation}
This set can include any non-overlapping substructures of the model, allowing flexibility in analyzing specific components.
Without loss of generality, we assume the components in $\mcC$ are ordered following the order they appear in the LLM's architecture. 

Some studies suggest that key knowledge in LLMs is primarily stored in attention heads and MLP layers (see, e.g.,~\citep{clark2019does,vig2020investigating, geva2020transformer, geva2022transformer,meng2022locating}). Hence, this paper focuses primarily on the attention heads and the MLP layers. Specifically, we are interested in each head's attention weights and the behavior of individual neurons after the down-project layer. In our empirical analysis, we consider models ranging from DistilGPT2 ($72$ attention heads and $4608$ MLP neurons) to GPT2-xl ($1200$ attention heads and $76800$ MLP neurons), as well as new architectures of similar size.

\vspace{0.04 in}
\noindent\textbf{Forward Mapping.}
For each component $c_i\in \mcC$, we denote the input hidden states by $\bg_{i}$, the output hidden states by $\bh_{i}$, and the component mapping function by $f_i$, which maps out $\bg_i$ to $\bh_i$, i.e., 
\begin{equation}
\label{eq:fwd}
\setlength\abovedisplayskip{8pt}
   \setlength\belowdisplayskip{8pt}
    \bh_{i} \; = \; f_i\big(\bg_{i}\big) \ , \qquad \forall i\in[N]\ .
\end{equation}
We use the notations $\bg\triangleq (\bg_{i},\cdots, \bg_{N})$ and $\bh\triangleq (\bh_{i},\cdots, \bh_{N})$ to compactly represent all the input and output states, respectively. 

\section{Multi-Component Causal Tracing}
\label{sec:CT}
%%%%%%%%%%%%%%%%%%%%%%%%%%%%%%%%%%%%%%%%%%%%%%
% Causal Mediation Analysis
\noindent\textbf{Causal Mediation Analysis.} Our central objective is to quantify how changes in specific components of an LLM influence an LLM's behavior in specific contexts (e.g., biases).  To formalize a principled approach to trace the impact of a change in a component, we perform \emph{causal mediation analysis}~\citep{pearl2022direct}. This enables examination of how an \emph{interventional treatment} (i.e., a change in a component) affects the outcome (i.e., a target metric) through variations in intermediate variables, known as \emph{mediators} (i.e., intermediate components). In the setting of LLMs, the mediators are various components of the LLM, such as attention heads and MLP neurons, where the intervention targets can be attention weights and MLP down-projection.  Our focus is on these indirect effects: specifically, how our treatment, intervening on mediators, changes the metric of interest.

We denote the subset of components selected for treatment by $\mcH\subseteq \mcC$. To specify the components selected for treatment, we define $\{m_i:i\in[N]\}$ such that $m_i\triangleq\mathds{1}\{c_i\in\mcH\}$, where $\mathds{1}$ is the indicator function. Accordingly, we define $\bm\triangleq (m_i,\dots, m_N)$. In this context, a treatment involves intervening on these components by replacing specific attention weights or neuron activations with counterfactual ones obtained from a separate experiment. This procedure is described next.

\vspace{0.04 in}
\noindent\textbf{Intervention via Mixture Forward.} We formalize and perform a process referred to as a \emph{mixture forward pass} to analyze how an intervention on the set $\mcH$ propagates throughout the rest of the model components and eventually influences a target metric. This process is demonstrated in Figure~\ref{fig:toy_structure} and it is similar to activation patching when applied only to activations~\cite{davies2023discovering}. For this purpose, we specify the following two word-generation processes.

\begin{enumerate}[leftmargin=0.15in, itemsep=0.03in, topsep=0.04in]
    \item \emph{Counterfactual Run:} Corresponding to a given $\bs$, we create a \emph{counterfactual} prompt, denoted by $\bs'$, by modifying some of the elements of the prompt $\bs$ (e.g., altering a word in the prompt or adding noise to word embeddings). This modified prompt generates a new set of hidden states $\bg'$ and $\bh'$, which are related according to \eqref{eq:fwd}.
    \item \emph{Interventional Run:} Given a prompt $\bs$ and its associated counterfactual $\bs'$ and a set of components $\mcH$, we create an intervention by replacing the hidden states $\{\bh_i: m_i=1, i\in [N] \}$ with those associated with $\bs'$, i.e., $\{\bh'_i: m_i=1, i\in [N] \}$. These changes, in turn, induce changes in all the downstream component states, distinct from those associated with $\bs$ and $\bs'$. To emphasize this, we define $\bar\bg_i$ and $\bar\bh_i$ as the post-intervention hidden states of the components. We note that the state transformation functions $\{f_i:i\in[N]\}$ remain unchanged. Hence,
    \begin{equation}
\label{eq:mixfwd}
    \bar\bh_{i} \; \triangleq \;  (1-m_i) \cdot f_i(\bar\bg_{i})   + m_i \cdot \bh'_{i} \ ,
\end{equation}
\end{enumerate}
Finally, similarly to $v_t$ and $\bv^t$, we define $u_t$ and $\bu^t$ as the sequence generated by the interventional run and denote the associated conditional PMF in the interventional run and the average conditional PMF over the interval $\{t,\dots, t+d\}$ by
\begin{align}
    \bar{\P}_t(u_t) \; \triangleq  &\; \P_{\theta}(u_t \mid \bu^{t-1}, \bm) \ , \\
    \bar{\P}_{t:d}^{\sf avg}(\bu^{t+d}) &\; \triangleq \;\frac{1}{d+1} \sum_{i=0}^{d} \bar{\P}_{t+i}(u_{t+i}) \ .
\end{align}
Throughout the rest of the paper, we refer to $\P_t$ and $\bar{\P}_t$ as the {\em observational} and {\em interventional} distributions, respectively. 

\vspace{0.04 in}
\noindent\textbf{Metric.} The target metric $\ell(\bs, \bx, \by)$ compares the observational and interventional distributions evaluated on a prompt $\bs$, a continuation $\bx$, and possibly an alternative continuation $\by$.

\begin{example}
% [Gender bias.] 
\label{exp:genderbias}
\emph {
\textbf{(Gender bias.)}
Consider the example of gender bias similar to the one in Figure~\ref{fig:toy_structure}, in which we have the prompt $\bs=$\emph{``The technician told the customer that she''}. The counterfactual intervention replaces ``\emph{she}'' with ``\emph{he}'', producing the counterfactual prompt $\bs'=$``\emph{The technician told the customer that he}''. A model without gender bias is desired that would associate the two roles equally with the pronoun ``\emph{she}''.
Specifically, given a stereotypical continuation $\bx=$``\emph{could pay with cash.}'' and an anti-stereotypical one $\by=``$\emph{had completed the repair.}'', it is desirable to have equal completion probabilities
\begin{equation}
    \P^{\sf avg}_{t:d}(\bx) \; = \; \P^{\sf avg}_{t:d}(\by) \ .
\end{equation}
In this context, to quantify the bias, one can use the conditional likelihood ratio of generating $\bx$ versus~$\by$ \cite{vig2020investigating}, i.e.,
\begin{equation}
    \P_{\theta}(\by\mid \bs) \; \big/ \; \P_{\theta}(\bx\mid \bs)  \ .
\end{equation}
When this ratio is smaller (larger) than 1, the model behaves stereotypically (anti-stereotypically), and when the ratio is 1, it is considered unbiased. Leveraging this notion of unbiasedness, we can set the metric $\ell$ as follows.}

{
\fontsize{10pt}{11pt}\selectfont
\begin{equation}
\label{eq:metricgenderbias}
\ell(\bs, \bx, \by)  =  \frac{\bar{\P}_{|\bs|+1:|\by|}^{\sf avg}(\by)/\bar{\P}_{|\bs|+1:|\bx|}^{\sf avg}(\bx)}{\P_{|\bs|+1:|\by|}^{\sf avg}(\by)/\P_{|\bs|+1:|\bx|}^{\sf avg}(\bx)} -1 \ .
\end{equation}
}
\emph {The value $\ell(\bs, \bx, \by)=0$ means that the components intervened on in $\bar{\P}^{\sf avg}$ do not affect gender bias; $\ell(\bs, \bx, \by)>0$ indicates that they store stereotype information; and $\ell(\bs, \bx, \by)<0$ indicates that they store anti-stereotype information.
}
\end{example}

\begin{example}
% [Knowledge Localization.]
\emph{ 
\textbf{(Knowledge Localization.)} 
Consider the setting in which we seek to identify the critical components that enable the LLM to answer knowledge-based questions. For example, consider the prompt $\bs=$ \emph{``Steve Jobs was the founder of''} with the desired output being that the model assigns full probability to $\bx=$\emph{``Apple''}:}
\begin{equation}
    \mathbb{P}_{\theta}(\bx \mid \bs) \; = \; 1 \ .
\end{equation}
\emph {For this example, the counterfactual run is generated by adding a large amount of noise to the embedding layer, disrupting knowledge retrieval. Hence, an objective can be to measure how well the model answers the question based on the stored knowledge in its components. For this purpose, we can use the following metric:
\begin{equation}
 \ell(\bs, \bx, \by) = \P_{|\bs|+1:0}^{\sf avg}(\bx) - \bar{\P}_{|\bs|+1:0}^{\sf avg}(\bx) \  .
\end{equation}
In this case, we only use one continuation $\bx$. Higher values of $\ell(\bs, \bx, \by)$ indicate that the components intervened in $\mcH$ are crucial for knowledge retention, while a low $\ell(\bs, \bx, \by)$  suggests that these components do not store factual information.
}
\end{example}

%%%%%%%%%%%%%%%%%%%%%%%%%%%%%%%%%%%%%%%%%%%%%%
% Causal Tracing Problem
\noindent\textbf{Causal Tracing Objective.} The objective is to identify a set of at most $S\in\Z^+$ components that have the highest joint contribution to a metric of interest. To formalize, denote the available dataset by $\mcD\triangleq \{\bs_i, \bx_i, \by_i\}_{i=1}^{M}$, where each $\bs_i$ is an original input sentence, and $\bx_i$ and $\by_i$ are two possible continuations for it ($\by_i$ may be unused as in Example 2 above). Given $\mcD$, the set of intervened components specified by $\bm$, and the associated observational and interventional distributions $\P_t$ and $\bar{\P_t}$, we define the average metric 
\begin{equation} 
\label{eq:avg_metric} 
\ell(\mcD,\bm ) \; \triangleq  \; \frac{1}{M}   \sum_{i=1}^M\;  \ell(\bs_i, \bx_i ,\by_i)\ .
\end{equation} 
Hence, the objective of causal tracing is 
\begin{equation} 
\label{eq:problem}
\max_{\bm \in \{0,1\}^N} \; \ell(\mcD,\bm)  \quad \text{s.t.} \quad \|\bm\|_0 \leq S \ , 
\end{equation}
where the sparsity parameter $S\in\Z^+$ is a pre-specified constraint chosen by the user, which can be set according to the user's practical needs and constraints. This constraint ensures greater interpretability by isolating a small, meaningful subset of model components for targeted analysis and for later knowledge editing or fine-tuning. We note that when $S=1$, the problem reduces to a single-component causal tracing problem that has already been investigated.  We define $\frac{S}{N}$ as the \emph{sparsity level} of an intervention. The problem can also be cast as selecting a minimum-cardinality subset achieving a target performance, and our algorithm can handle both formulations.
\vspace{-0.02 in}
\begin{theorem}[Search Complexity]
\label{thm:complexity}
Given positive integers $N$ and $S$, the size of the search space in \eqref{eq:problem}, denoted by $C_N(S)$, scales as follows.
\begin{enumerate}[leftmargin=0.15in, itemsep=0.00in, topsep=0.00in]
    \item \textbf{$S$ is a fixed constant:} $C_N(S)$ grows polynomially in $N$, i.e.,  
    $ C_N(S) \; = \; \Theta\bigl(N^S\bigr)\;$. \vspace{-0.02 in}
    \item \textbf{$S$ is proportional to $N$:} when $S = \alpha N$ for a  constant $\alpha \in (0,1)$, $C_N(S)$ grows exponentially in $N$ (see Appendix~\ref{pf:complexity} for details).
\end{enumerate}
\end{theorem}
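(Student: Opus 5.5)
The plan is to first make the search space explicit and then bound a sum of binomial coefficients in the two regimes. The feasible set of \eqref{eq:problem} is $\{\bm\in\{0,1\}^N:\|\bm\|_0\le S\}$, so
\begin{equation*}
C_N(S) \;=\; \sum_{k=0}^{S}\binom{N}{k}\ .
\end{equation*}
If one prefers to count only supports of size exactly $S$, i.e.\ $\binom{N}{S}$, the same estimates below apply, since that term is sandwiched between the largest summand and the whole sum.

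\textbf{Fixed $S$.} For the lower bound I will use $C_N(S)\ge \binom{N}{S}\ge (N/S)^S = S^{-S}N^S$. For the upper bound I will use $\binom{N}{k}\le N^k/k!\le N^k$, which gives $C_N(S)\le \sum_{k=0}^S N^k\le (S+1)N^S$ for $N\ge 1$. Since $S$ is a constant, both constants $S^{-S}$ and $S+1$ are independent of $N$, and together they yield $C_N(S)=\Theta(N^S)$.

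\textbf{$S=\alpha N$.} The cleanest route is the standard entropy bounds on binomial coefficients. Let $H(\alpha)=-\alpha\log_2\alpha-(1-\alpha)\log_2(1-\alpha)$. The lower bound is
\begin{equation*}
C_N(S)\;\ge\;\binom{N}{\alpha N}\;\ge\;\frac{1}{N+1}\,2^{N H(\alpha)}\ .
\end{equation*}
This follows from the method of types: $\binom{N}{k}\alpha^k(1-\alpha)^{N-k}$ is maximized at $k=\alpha N$ among the $N+1$ terms summing to $1$. The upper bound is $C_N(S)\le 2^N$ trivially. For $\alpha\le 1/2$ one also gets the sharper bound $C_N(S)\le 2^{NH(\alpha)}$, from $1\ge\sum_{k\le\alpha N}\binom{N}{k}\alpha^k(1-\alpha)^{N-k}\ge C_N(S)\,\alpha^{\alpha N}(1-\alpha)^{(1-\alpha)N}$. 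Since $H(\alpha)>0$ for every $\alpha\in(0,1)$, the lower bound is exponential in $N$, so $C_N(S)=2^{\Theta(N)}$. In fact $\tfrac1N\log_2 C_N(S)\to H(\alpha)$ when $\alpha\le1/2$, and $\to 1$ when $\alpha>1/2$.

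\textbf{Main obstacle.} The only delicate step is the lower bound $\binom{N}{\alpha N}\ge 2^{NH(\alpha)}/(N+1)$. I will prove it via the maximal-term argument just given; Stirling's formula is an alternative. Rounding of $\alpha N$ to an integer needs a remark: replacing $\alpha N$ by $\lfloor\alpha N\rfloor$ changes the exponent only by $O(\log N)$.
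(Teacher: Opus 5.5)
Your proof is correct. For fixed $S$ it is the paper's argument with explicit constants: you use $\binom{N}{S}\ge (N/S)^S$ and $\sum_{k\le S}\binom{N}{k}\le (S+1)N^S$, where the paper uses termwise $O(N^c)$ bounds and $\Omega(N^S)$ from the last term.

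For $S=\alpha N$ you take a genuinely different route. The paper applies Stirling's formula to the single coefficient $\binom{N}{\lfloor\alpha N\rfloor}$, obtaining $2^{NH_2(\alpha)}/\sqrt{2\pi\alpha(1-\alpha)N}$ up to a $1+o(1)$ factor. For $\alpha<1/2$ it then bounds the smaller terms by a geometric series, using the ratio $\binom{N}{j-1}/\binom{N}{j}=j/(N-j+1)\le r<1$. For $\alpha\ge 1/2$ it squeezes the sum between half of the total binomial mass and $2^N$. This gives the sharp orders $\Theta(2^{NH_2(\alpha)}/\sqrt N)$ and $\Theta(2^N)$, polynomial prefactor included.

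Your method-of-types inequalities $2^{NH(\alpha)}/(N+1)\le C_N(S)\le 2^{NH(\alpha)}$ (the upper one for $\alpha\le 1/2$) work differently. They are non-asymptotic, need no control of Stirling error terms, and already give everything the theorem asserts: exponential growth with the exact rate. What they cannot give is the prefactor, since they only determine $C_N(S)$ within a factor of $N+1$. Adding a Stirling estimate of the top term and the paper's geometric-series step would recover the $1/\sqrt N$.

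There are two minor slips, neither affecting the main claim.
\begin{itemize}
\item The limit $\frac1N\log_2 C_N(S)\to 1$ for $\alpha>1/2$ does not follow from the displayed lower bound $\binom{N}{\alpha N}\ge 2^{NH(\alpha)}/(N+1)$, whose rate is only $H(\alpha)<1$. The lower bound must come from the central term $\binom{N}{\lfloor N/2\rfloor}\ge 2^N/(N+1)$, which is in the sum because $S\ge N/2$. Alternatively, use the paper's half-mass argument, which even gives $\Theta(2^N)$.
\item $\binom{N}{S}$ is not sandwiched between the largest summand and the full sum; it is at most the largest summand. So for $\alpha>1/2$ the exact-size count grows at rate $H(\alpha)$ rather than $1$. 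That is still exponential, so your conclusion for that variant stands.
\end{itemize}
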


%%%%%%%%%%%%%%%%%%%%%%%%%%%%%%%%%%%%%%%%%%%%%%
% Algorithm
\section{Penalized Gradient-based Causal Tracing (\algoname) Algorithm}
\label{sec:algorithm}
As 
% specified by 
Theorem~\ref{thm:complexity} shows that the discrete search space for the problem defined in \eqref{eq:problem} is prohibitive even for the smallest models, in which $N$ can be in the order of $10^{4}$. To circumvent this search complexity, we design the \textbf{P}enalized \textbf{G}radient-\textbf{B}ased \textbf{C}ausal \textbf{T}racing (\algoname) algorithm. The algorithm converts the discrete problem into a continuous one, incorporates an appropriate reward transformation to prevent exploding gradients, and adds a regularization function. The algorithm stops when the desired sparsity parameter $S$ is achieved. Next, we describe the key ideas of the algorithm. 
The algorithm is summarized in Algorithm~\ref{alg:CT} in Appendix~\ref{sec:code}. 

\vspace{.04 in}
\noindent\textbf{Continuous Relaxation.} To circumvent the combinatorial complexity of the search space, we relax the original subset selection into a soft subset selection framework. That is, the masking vector $\bm$ falls in the continuous space $[0,1]^N$. Under this relaxation, $m_{i} = 1$ specifies \emph{full} intervention on component $c_i$, where the hidden states are replaced by the counterfactual ones. In contrast, $m_{i} = 0$ specifies no intervention on $c_i$, and the post-intervention hidden states of $c_i$ is computed as $\bar\bh_{i}=f_i(\bar\bg_{i})$. Besides these extreme cases, $m_{i}\in(0,1)$ indicates a linear mixing of two states using \eqref{eq:mixfwd}, and we use the same set of notations for conditional probabilities and metrics.

\vspace{.04 in}
%%%%%%%%%%%%%%%%%%%%%%%%%%%%%%%%%%%%%
% Relaxed Problem
\noindent\textbf{Transformed Reward.} Based on the discussion above, the penalized counterpart of the original problem in~\eqref{eq:avg_metric} with a continuous relaxation becomes
\begin{equation}
\label{eq:loss_ori}
    \max_{\bm \in [0,1]^N} \ell(\mcD, \bm)  - {\sf reg}(\bm) \ ,
\end{equation}
where ${\sf reg}(\bm)$, specified in the next paragraph, is a penalty term that encourages binary and sparse solutions. We note that, in practice, the metric $\ell(\mcD,\bm)$ can be unbounded (e.g., in \eqref{eq:metricgenderbias}), which can lead to unbounded gradients and is not amenable to gradient-based optimization. To address this, we apply an inverse transformation to the metric to maintain stable gradient behavior and solve the following problem.
\begin{equation}
\label{eq:loss}
\resizebox{0.87\linewidth}{!}{$\displaystyle
    \min_{\bm \in [0,1]^N}  \mcL (\mcD,\bm)  \triangleq  \frac{1}{1+\ell(\mcD, \bm)}  + {\sf reg}(\bm) \; .
    $}
\end{equation}

%%%%%%%%%%%%%%%%%%%%%%%%%%%%%%%%%%%%%%%% 
% Regularization Function
\noindent\textbf{Scheduled Penalty Function.} A standard regularization approach involves penalizing the $\ell_0$ norm to control the sparsity. However, the $\ell_0$ norm is not differentiable and often yields values between $0$ and $1$, causing performance degradations when binarized. To address this, we introduce the following penalty term, which encourages binary masking:
\begin{equation}
\label{eq:reg}
{\sf reg}(\bm) \; = \; \lambda_1 \|\bm\|_1 + \lambda_2 \bm^{\top} (\mathbf{1}-\bm) \ ,
\end{equation}
where $\lambda_1,\lambda_2\in\R_+$ are regularization parameters and $\mathbf{1}$ is an all-one vector. The first term serves as an approximation for the $\ell_0$ norm, while the second term penalizes non-binary values. This term is maximized when $\bm = 0.5\cdot\mathbf{1}$ and becomes zero when $\bm$ is binary. The choice of the penalty term is motivated by the behavior of its gradient.  As we restrict the values of each $\{m_i\}$ to be within $[0,1]$, the gradient of 
% ${\sf reg}(\bm)$
the regularization term
takes the form
\begin{equation}
\label{eq:Reggradient}
        \nabla {\sf reg}(\bm) \; = \; \lambda_2 \Bigl(\bigl(1+\frac{\lambda_1}{\lambda_2}\bigr)\mathbf{1}-2\bm\Bigr)\ .
\end{equation}
This gradient involves a directional force imposed by the term $-2\bm$: if $2m_i>1+\frac{\lambda_1}{\lambda_2}$, it encourages $m_i$ to move toward $1$; and if $2m_i<1+\frac{\lambda_1}{\lambda_2}$, it encourages $m_i$ to move toward $0$. Hence, the encouraged direction is specified by  $\lambda_1/\lambda_2$. To prevent early convergence to suboptimal local minima, we adopt a dynamic scheduling strategy for the regularization parameters $\lambda_1$ and $\lambda_2$, gradually increasing their values over the course of training. Desired sparsity and binarity are guaranteed when both regularization parameters are large enough. This encourages the optimizer to explore more broadly in the early stages and remove the components gradually. We note that the penalty function we design does not include $S$.

\vspace{0.04 in}
\noindent\textbf{Gradient Descent and Discretization.} Finally, we adopt gradient descent to minimize $\mcL(\mcD,\bm)$ in \eqref{eq:loss}. We denote the value of $\bm$ in iteration $t$ of gradient descent with truncation by $\bm_t$, and it is computed as follows:
\begin{align}
\label{eq:gradiendescent}
    \bm_{t+1} & = \bm_{t} - \eta_t \nabla \mcL_t(\mcD,\bm_t)\ , \\
    \bm_{t+1} & = \min\{\max\{\bm_{t+1},\mathbf{0}\},\mathbf{1}\} \ ,
\end{align}
where $\eta_t$ is the learning rate in iteration $t$ and $\mcL_t$ is the loss function calculated based on $\bm_t$ and mini-batch.

We apply truncation to obtain a binary mask by a threshold $\tau$ and check the sparsity after each epoch:
\begin{equation}
\label{eq:discrete}
    \mcH_{t+1} = \{c_i\in[N] \;:\; m_{t+1,i}>\tau\} \ ,
\end{equation}
where $\bm_{t+1}=(m_{t+1,1}\dots,m_{t+1,N})$. The algorithm then measures the sparsity of $\mcH$ and stops when $|\mcH_{t+1}|\leq S$. In our experiments, we observe that setting $\tau=0.5$ gives good performance across various datasets.

We note that the Desiderata-based Component Masking (DCM) algorithm~\cite{davies2023discovering} also relies on soft masking and gradient-based optimization, but differs from our approach in both its reward formulation and regularization design.

\vspace{0.04 in}
\noindent\textbf{Reward:} DCM uses the raw metric $\ell(\mcD,\bm)$ directly as the reward, without any transformation such as $\frac{1}{1+\ell(\mcD,\bm)}$, and is applied only to activation-layer components with respect to next-token target probabilities. Using raw metrics can make optimization harder to calibrate, since different metrics may have very different scales, and even the same metric may vary in scale across training stages (e.g., when considering MLP neuron ratios in Figures~\ref{fig:results_MLP} and~\ref{fig:results_factual}). As a result, it becomes more difficult to choose and interpret a consistent regularization strength across settings. In contrast, our transformed reward $\frac{1}{1+\ell(\mcD,\bm)}$ keeps the objective within a more stable numerical range, which empirically makes regularization tuning more robust across metrics and experiments.

\noindent\textbf{Penalty function:} DCM adopts the $\ell_{0.5}$ regularization of \citet{louizos2018learning}, a continuous approximation to an $\ell_0$ penalty that encourages sparsity by penalizing nonzero values. However, it does not explicitly penalize “binary violation” (mask values near 0.5). In PGB-CT, our penalty combines an $\ell_1$ term with a term proportional to $\bm(\mathbf{1}-\bm)$, which directly penalizes values away from $\{0,1\}$ and thus encourages masks to be close to binary throughout optimization.
We later demonstrate the effectiveness of our design in~\algoname{}.

\setlength{\belowrulesep}{2pt}  % adjust as needed
\renewcommand{\arraystretch}{0.75}
\begin{table*}[th]
    \centering
    \begin{adjustbox}{max width=\textwidth}
    \begin{tabular}{ll|*{4}{c}|c|*{4}{c}|c}
        \toprule
        \multicolumn{2}{l|}{%
            \multirow{4}{*}{%
                \makecell[lt]{%
                \diagbox[innerwidth=4.5cm, height=1cm]{Model}{\vspace{-0.2 cm}Dataset}%
                }
            }
        }
          & \multicolumn{5}{c|}{WinoGender} 
          & \multicolumn{5}{c}{WinoBias} \\
        \cmidrule(lr){3-7}  \cmidrule(lr){8-12} 
         & & \multicolumn{4}{c|}{Sparsity~~$S/N$}& Time & \multicolumn{4}{c|}{Sparsity~~ $S/N$}& Time\\ 
        \cmidrule(lr){3-6}  \cmidrule(lr){7-7} \cmidrule(lr){8-11} \cmidrule(lr){12-12}
        & & 10\% & 20\% & 30\% & 40\% & minutes  & 10\% & 20\% & 30\% & 40\% & minutes \\
        \midrule
        % --- DistilGPT2 ---
        \multirow{4}{*}{DistilGPT2} 
          & random   & 0.0082 & 0.0093 & 0.0091 & 0.0100 & 2.23 & 0.0128 & 0.0146 & 0.0146 & 0.0160 & 6.55  \\
          & top-$k$    & 0.0100 & 0.0117 & 0.0123 & 0.0128 & \textbf{0.18} & 0.0157 & 0.0171 & 0.0178 & 0.0181 & \textbf{0.47} \\
          & greedy   & \textbf{0.0103} & \textbf{0.0122} & \textbf{0.0134} & \textbf{0.0140} & 3.57 & \textbf{0.0159} & \textbf{0.0176} & \textbf{0.0186} & \textbf{0.0192} & 10.62 \\
          & \algoname & \textbf{0.0103} & \underline{0.0118} & \underline{0.0130} & \underline{0.0137} & \underline{0.43} & \textbf{0.0159} & \underline{0.0174} & \underline{0.0181} & \underline{0.0190} &  \underline{1.44} \\
        \midrule
        % --- GPT2 ---
        \multirow{4}{*}{GPT2-small} 
          & random   & 0.027 & 0.032 & 0.036 & 0.037 & 3.54 & 0.080 & 0.096 & 0.100 & 0.103 & 10.25 \\
          & top-$k$    & 0.037 & 0.041 & 0.042 & 0.044 & \textbf{0.50} & 0.108 & 0.111 & 0.113 & 0.114 & \textbf{1.49} \\
          & greedy   & \textbf{0.040} & \textbf{0.045} & \textbf{0.049} & \textbf{0.050} & 23.03 & \textbf{0.111} & \textbf{0.116} & \textbf{0.119} & \textbf{0.120} & 68.42 \\
          & \algoname & \underline{0.039} & \underline{0.044} & \underline{0.048} & \underline{0.049} & \underline{0.79} & \underline{0.109} & \underline{0.115} & \underline{0.117} & \underline{0.119} & \underline{2.61} \\
        \midrule
        % --- GPT2-medium ---
        \multirow{4}{*}{GPT2-medium} 
          & random   & 0.138 & 0.153 & 0.165 & 0.173 & 7.23 & 0.190 & 0.247 & 0.305 & 0.336 & 21.45 \\
          & top-$k$    & 0.191 & 0.201 & 0.203 & 0.205 & \underline{2.76} & 0.374 & 0.378 & 0.389 & 0.388 & \underline{8.18} \\
          & greedy   & \textbf{0.208 }& \textbf{0.224} & \textbf{0.232} & \textbf{0.237} & 357.28 & \textbf{0.391} & \textbf{0.406} & \textbf{0.415} & \textbf{0.420} & 1001.50 \\
          & \algoname & \underline{0.203} & \underline{0.218} & \underline{0.227} & \underline{0.233} & \textbf{1.56} & \underline{0.381} & \underline{0.394} & \underline{0.401} & \underline{0.404} & \textbf{5.32} \\
        \midrule
        % --- GPT2-large ---
        \multirow{3}{*}{GPT2-large} 
          & random   & 0.085 & 0.099 & 0.111 & 0.128 & 13.43 & 0.164 & 0.216 & 0.262 & 0.297 & 36.8 \\
          & top-$k$    & \underline{0.146} & \underline{0.159} & \underline{0.165} & \underline{0.168} & \underline{9.46} & \underline{0.353} & \underline{0.357} & \underline{0.361} & \underline{0.364} & \underline{26.07} \\
          & \algoname  & \textbf{0.152} &\textbf{0.171} & \textbf{0.184} & \textbf{0.192} & \textbf{2.43} & \textbf{0.356} & \textbf{0.370} & \textbf{0.375} & \textbf{0.383} & \textbf{8.10} \\
        \midrule
        % --- GPT2-xl ---
        \multirow{3}{*}{GPT2-xl} 
          & random   & 0.047 & 0.078 & 0.094 & 0.098 & \underline{24.83} & 0.221 & 0.393 & 0.442 & 0.460 & 64.98 \\
          & top-$k$    & \underline{0.155} & \underline{0.175} & \underline{0.186} & \underline{0.188} & 29.38 & \underline{0.523} & \underline{0.531} & \underline{0.536} & \underline{0.539} & \underline{62.85} \\
          & \algoname & \textbf{0.159} & \textbf{0.190} & \textbf{0.205} & \textbf{0.212} & \textbf{3.45} & \textbf{0.546} & \textbf{0.563} & \textbf{0.567} & \textbf{0.576} & \textbf{11.32} \\
        \midrule
        % --- Qwen3-1.7B ---
         \multirow{3}{*}{Qwen3-1.7B}
          & random   & 0.032 & 0.035 & 0.041 & 0.038 & 29.36 & 0.064 & 0.073 & 0.077 & 0.085 & 75.38 \\
          & top-$k$    & \underline{0.041} & \underline{0.042} & \underline{0.045} & \underline{0.044} & \underline{13.01} & \underline{0.089} & \underline{0.090} & \underline{0.091} &  \underline{0.092} &  \underline{31.97} \\
          & \algoname &  \textbf{0.042} & \textbf{0.046} & \textbf{0.048} & \textbf{0.050} & \textbf{3.21} & \textbf{0.090} & \textbf{0.094} & \textbf{0.096} & \textbf{0.097}& \textbf{9.39} \\
        \midrule
        % --- Llama-3.2-1B ---
        \multirow{3}{*}{Llama-3.2-1B} 
          & random   & 0.326 & 0.383 & 0.387 & \underline{0.418} & 20.71 & \underline{0.322} & 0.326 & 0.325 & 0.351 & 51.16 \\
          & top-$k$   & \textbf{0.450} & \underline{0.416} & \underline{0.400} & 0.366 & \underline{10.31} & 0.279 & \underline{0.358} & \underline{0.386} & \underline{0.415} & \underline{26.57} \\
          & \algoname & \underline{0.366} & \textbf{0.444} & \textbf{0.477} & \textbf{2.449}& \textbf{1.90} & \textbf{0.752} & \textbf{1.007} & \textbf{1.766} & \textbf{2.747} & \textbf{5.82} \\
        \bottomrule
    \end{tabular}
    \end{adjustbox}
    \caption{Averaged metric $\ell(\mcD,\bm)$ and execution time on the WinoGender and WinoBias datasets for selecting attention heads across different sparsity levels, algorithms, and LLMs. The best and second-best results are highlighted in \textbf{bold} and \underline{underlined}, respectively.}
    \label{tab:results_attention}
\end{table*}
\renewcommand{\arraystretch}{1.0}

\begin{figure*}[tbp]
    \centering
    \vspace{-0.06 in} 
    \includegraphics[width=0.42\textwidth]{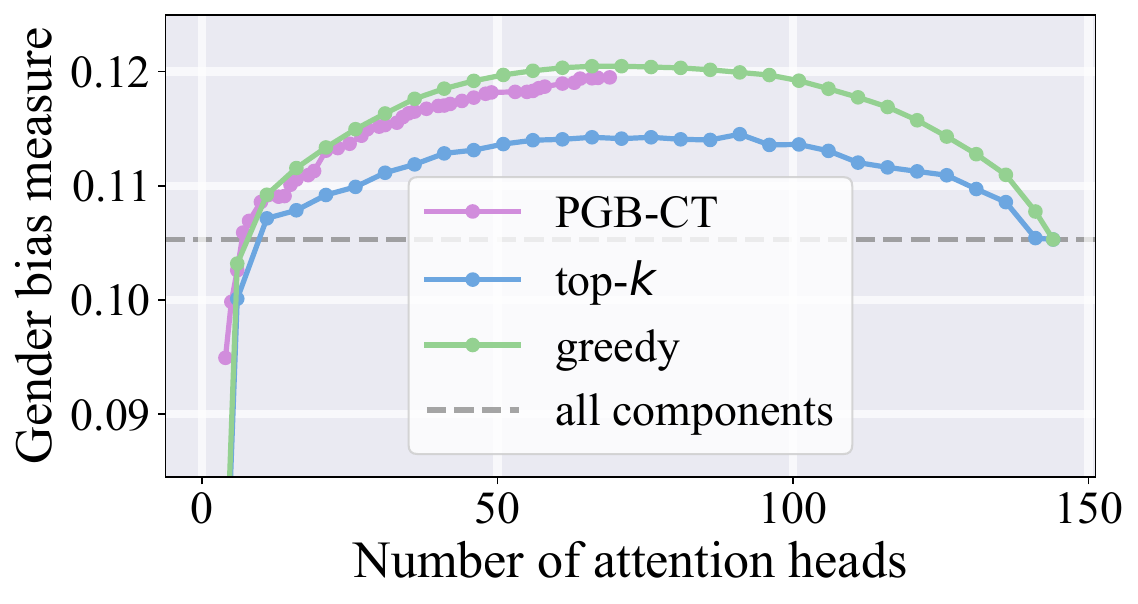}\hfill
    \includegraphics[width=0.46\textwidth]{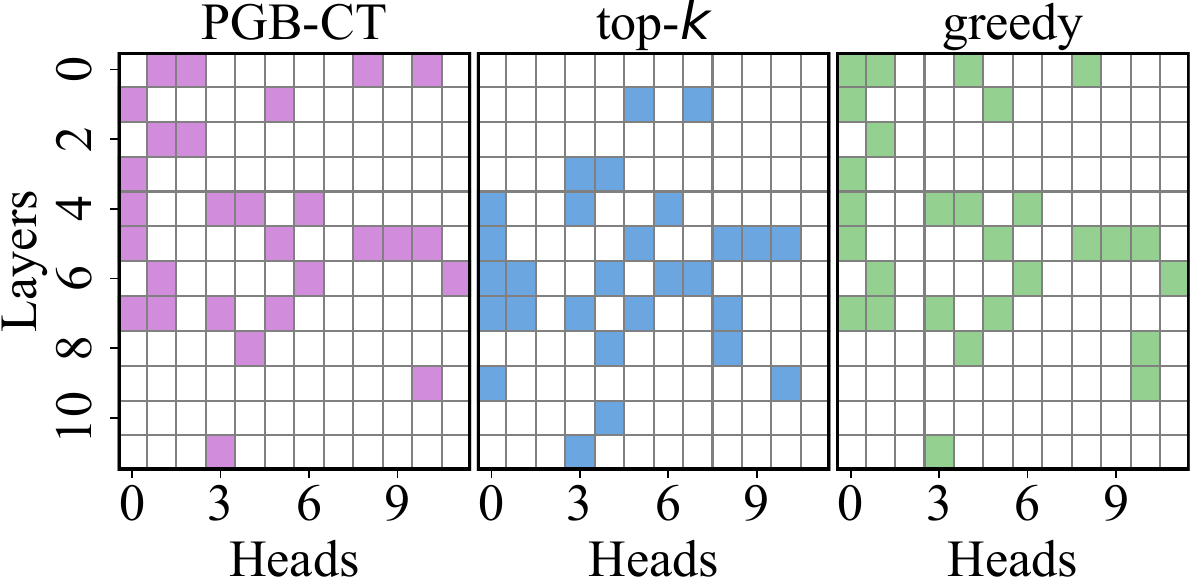} 
    \caption{Results of attention heads from GPT2-small on the WinoBias dataset. \textit{Left:} Gender bias measure vs. number of attention heads. \textit{Right:} Selected attention heads with $20\%$ sparsity.}
    \label{fig:results_attention} 
    \vspace{-0.1 in}
\end{figure*}

\section{Experimental Results}
\label{sec:experiment}
We present our experimental setup and compare \algoname{} with several baselines. The detailed description of datasets, baselines, and parameter selections is relegated to Appendix~\ref{sec:experimentdetails}, and the additional experiments with further discussions are relegated to Appendix~\ref{sec:additionalexperiments}. 
\subsection{Experiment Setup}
\noindent\textbf{Models.} Following the causal tracing study in~\citet{vig2020investigating}, we focus on the GPT2 family~\citep{radford2019language} and consider the following model sizes \citep{wolf2019huggingface}: small, medium, large, and xl, and a distilled GPT model trained by~\citet{sanh2019distilbert}. In addition, we report results for more new architectures with a comparable number of parameters: Qwen3-1.7B~\citep{yang2025qwen3} and Llama3.2-1B~\citep{llama32}. We also use LLaMA-7B and LLaMA-13B~\cite{touvron2023llama} to compare with DCM in their setting.

\begin{figure*}[tbp]
    \centering    \includegraphics[width=0.42\textwidth]{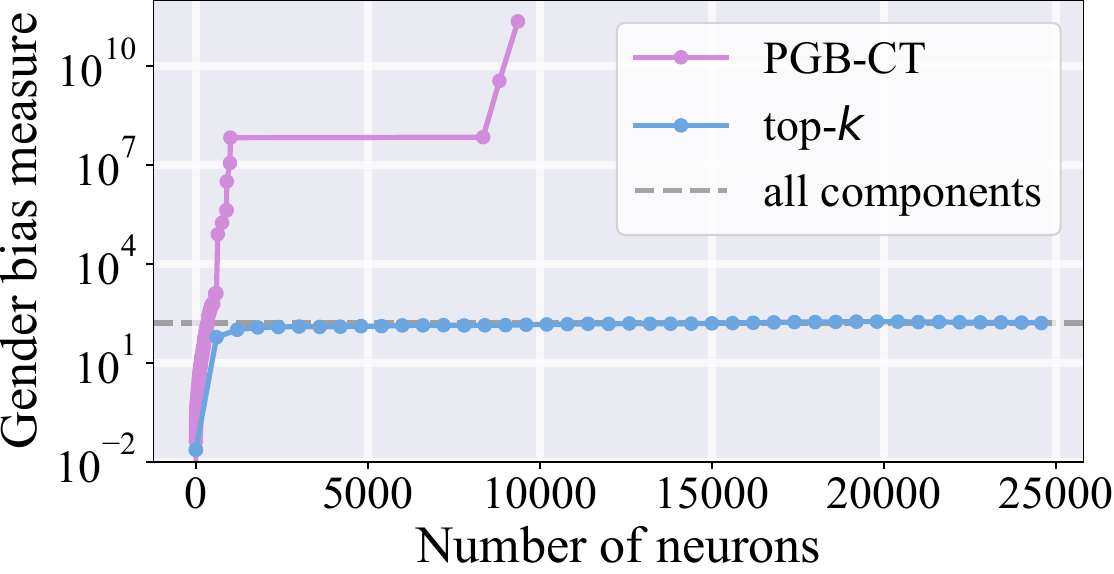}\hspace{0.5 in} \includegraphics[width=0.48\textwidth]{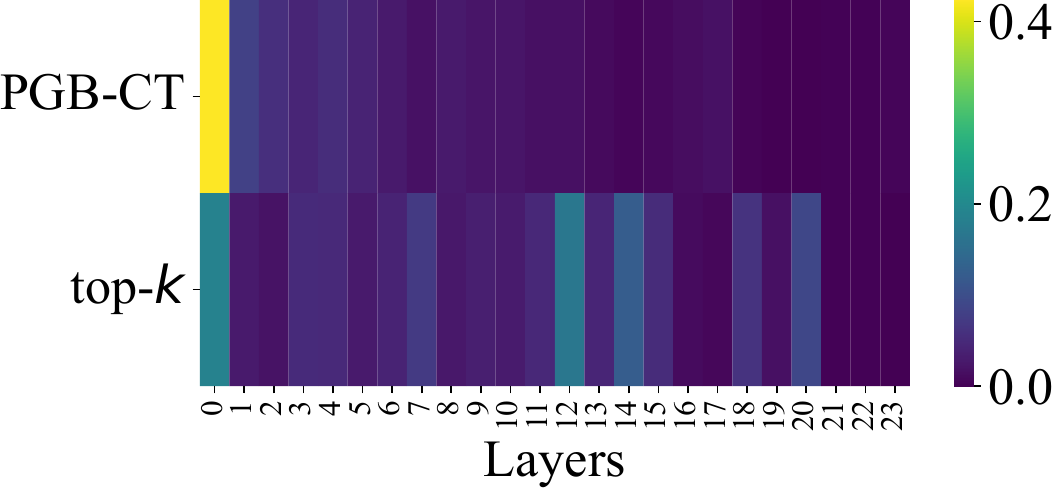} 
    \caption{Results of selecting MLP neurons on the Professions dataset with GPT2-medium. \textit{Left:} Gender bias  vs.~number of neurons. \textit{Right:} Fraction of selected neurons in each layer with 5\% sparsity.}
    \label{fig:results_MLP}
\end{figure*}

\vspace{0.04 in}
\noindent\textbf{Datasets.}
Following previous studies~\cite{vig2020investigating,meng2022locating}, we evaluate our algorithm on four datasets: two with attention weights as the candidate components and two with MLP neurons. We include further details about each dataset in Appendix~\ref{sec:dataset}. We also include experiments on the Variable Binding Desiderata (VBD) dataset~\cite{davies2023discovering} to demonstrate the effectiveness of our algorithm in simultaneously identifying attention heads and MLPs.

\vspace{0.04 in}
\noindent\textbf{Baselines.} \noindent We compare \algoname{} against three baseline algorithms (details in Appendix~\ref{sec:baseline}.): \emph{top-$k$ selection} and \emph{greedy selection} algorithms, used by \citet{vig2020investigating} to generalize single-component $S=1$ algorithms to the same multi-component problem in~\eqref{eq:problem}. We also include \emph{random selection}, which identifies the best one among randomly sampled $1000$ candidates of sparsity $S$, to serve as a baseline for execution time. We additionally evaluate the Desiderata-based Component Masking (DCM) algorithm~\cite{davies2023discovering}, but we found that it consistently failed in practice; further details are provided in Appendix~\ref{sec:DCM}. All experiments are run on a single NVIDIA A6000 GPU, and we report the execution time for each algorithm. All algorithms have well-defined stopping criteria, except for the greedy algorithm, for which execution time is measured when the algorithm achieves $40\%$ sparsity. In some settings, the runtimes of the top-$k$ and greedy algorithms become excessively large. We omit these results when the runtimes exceed 2 days.

%%%%%%%%%%%%%%%%%%%%%%%%%%%%%%%%%
% results attention heads
\subsection{Numerical Results on Attention Heads}
% \vspace{0.02 in}
\noindent\textbf{Attention Heads.} 
Table~\ref{tab:results_attention} summarizes the results on the WinoGender and WinoBias datasets when intervening on attention heads at various sparsity levels. To provide a more meaningful and comparable measure of “what fraction of the model” is being used across architectures and model sizes, we use the sparsity level $S/N$, where $N$ is the total number of components. We report the average performance metric $\ell(\mcD,\bm)$ and the execution time in minutes. The best and second-best results are highlighted in \textbf{bold} and \underline{underlined}, respectively. Figure~\ref{fig:results_attention} shows how the performance metric varies with the number of attention heads selected by three algorithms on GPT2-small (left) and highlights the heads selected at $20\%$ sparsity (right). Here, the horizontal ``all component'' line shows the result of intervening on all components. Our main observations from these results are as follows:
\begin{itemize}[leftmargin=0.15in, itemsep=0.01in, topsep=0.02in]
    \item \textbf{Averaged metric.} Across all models and sparsity levels, \algoname{} yields higher $\ell(\mcD,\bm)$ values compared to the top-$k$ algorithm and is comparable to the greedy algorithm, demonstrating its ability to find the influential attention heads. Furthermore, as the model size grows, the gap between the top-$k$ and greedy or our method increases, showing that top-$k$ does not scale well.\vspace{-.05 in}
    \item \textbf{Execution time.} \algoname{} significantly reduces the execution time compared to both the greedy and random selection algorithms and becomes faster than top-$k$ as the model size increases (larger than GPT2-medium). This performance advantage holds because the top-$k$ algorithm requires a forward pass for every component across all samples, and the greedy algorithm is even worse in this sense. In contrast, \algoname's execution time does not explicitly depend on the size of the search space; its only additional cost arises from the longer forward and backward passes inherent in larger LLMs. However, it still uses fewer samples than the random algorithm.\vspace{-.1 in}
    \item \textbf{Components selection.}  The attention heads selected by \algoname{} have a higher Jaccard similarity to those chosen by the greedy algorithm (0.64) than to those selected by the top‑$k$ algorithm (0.44). Also, both \algoname{} and the greedy algorithm favor the first layer while avoiding layer 10, and differ only in 3 components.
\end{itemize}

\begin{figure*}[tbp]
    \centering    \includegraphics[width=0.4\textwidth]{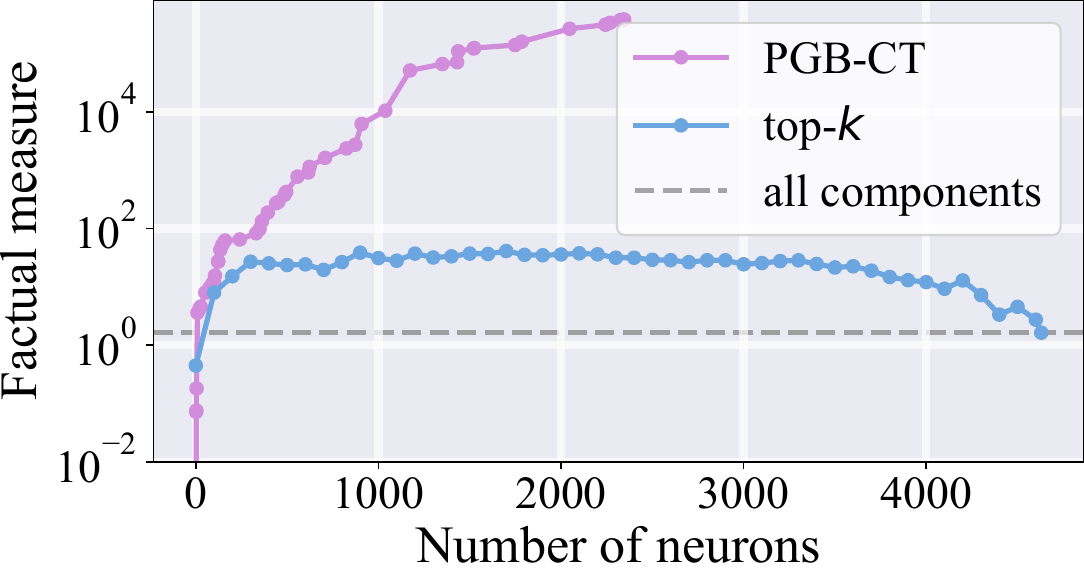}\hfill
    \includegraphics[width=0.5\textwidth]{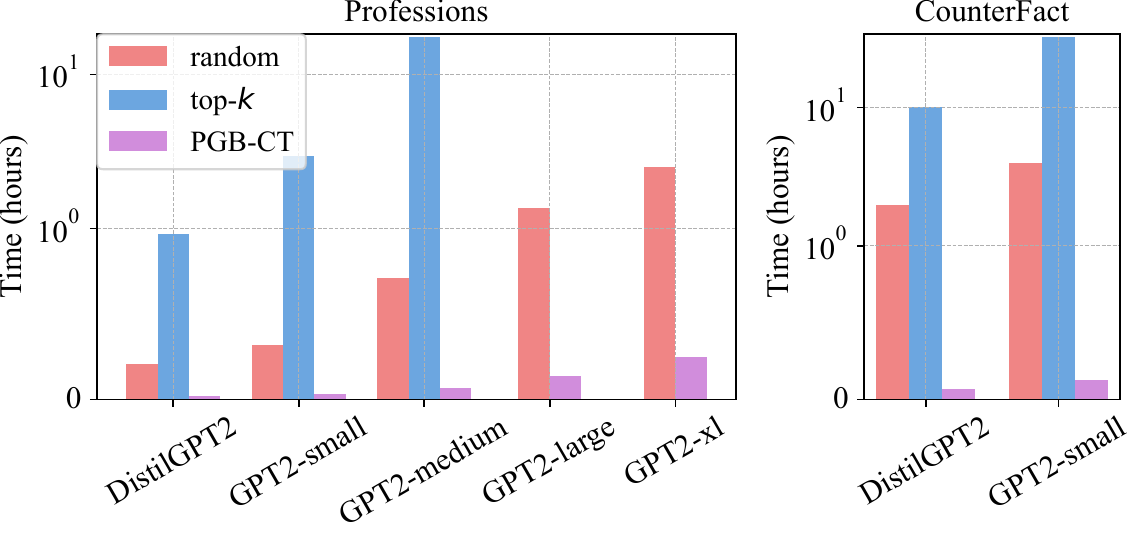} 
    \caption{\textit{Left:} Results of Factual locating measure vs.~number of MLP neurons on the CounterFact dataset with distilGPT2.   \textit{Right:} Execution time for different algorithms on Professions and CounterFact datasets.} 
    \label{fig:results_factual}  
\end{figure*}

\vspace{0.04 in}
\noindent\textbf{MLP Neurons.}
Figures~\ref{fig:results_MLP} and \ref{fig:results_factual} extend these findings to interventions on MLP neurons using the Professions and CounterFact datasets. Our key findings are as follows.
\begin{itemize}[leftmargin=0.15in, itemsep=0.01in, topsep=0.02in]
    \item \textbf{Averaged metric.} \algoname{} achieves significantly higher metric values $\ell(\mcD,\bm)$ compared to both random and top-$k$ baseline algorithms across different models and sparsity levels. Furthermore, under \algoname, the metric increases almost exponentially after top-$k$ reaches a plateau. (Figure~\ref{fig:results_MLP} left). Complete comparisons are provided in Table~\ref{fig:results_MLP} and Table~\ref{fig:results_factual} in Appendix~\ref{sec:additionalneuron}. This demonstrates a superior ability to identify critical sets of MLP neurons. \vspace{-.1 in}
    \item \textbf{Execution time.} Figure~\ref{fig:results_factual} (right) shows that the \algoname{} algorithm reduces the execution time from several hours in the baseline algorithms to only a few minutes, thereby enabling efficient exploration in large LLMs. \vspace{-.05 in}
    \item \textbf{Components selection.} Figure~\ref{fig:results_MLP} (right) shows the difference in selected neurons for top-$k$ and \algoname{}. \algoname{} tends to select more in the first layer, while top-$k$ relies more on later layers. 
\end{itemize}

\noindent\textbf{Attention Heads with MLP Neurons.}
\begin{figure}[!t]
    \centering
\includegraphics[width=0.8\columnwidth]{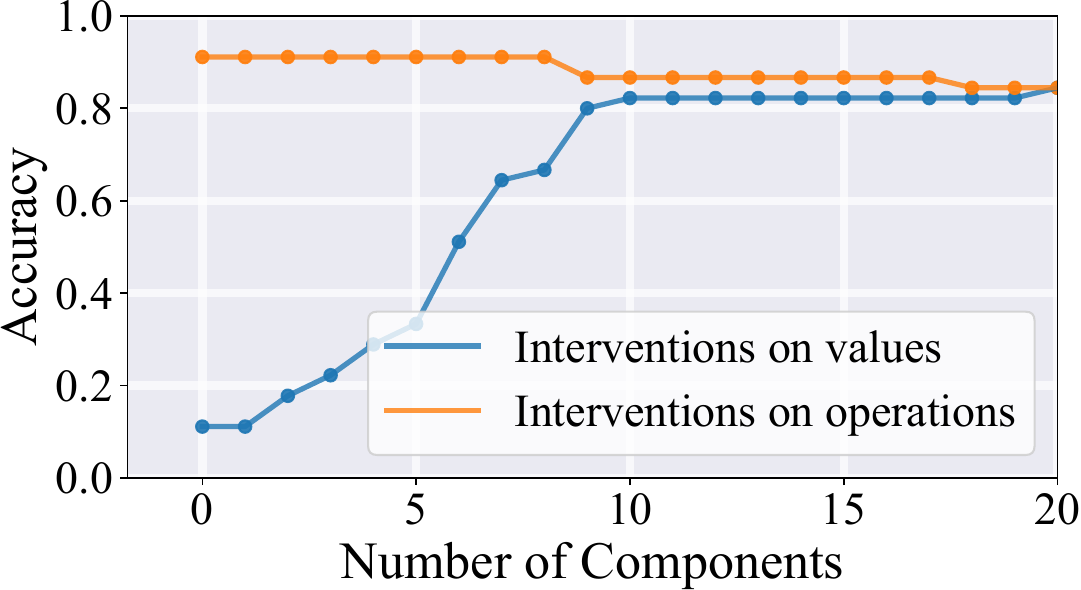}
    \caption{Results on the VBD dataset with LLaMA-13B, under interventions on the values or operations in the VBD equations.}
    \label{fig:VBD}
    \vspace{-0.1 in}
\end{figure}
We find that a straightforward joint analysis is challenging because when all attention heads and MLP neurons are treated as a single pool of components, the model tends to select MLPs almost exclusively, even at extremely low sparsity. This effect arises due to two factors:
\begin{itemize}[leftmargin=0.15in, itemsep=0.03in, topsep=0.03in]
            \item The number of MLP neurons is far larger than the number of attention heads (e.g., 24,576 versus 384 in GPT2-medium).
            \item The combined effect of MLP neurons is much stronger than that of attention heads (i.e., MLP interventions dominate). 
        \end{itemize}
        To gain better insight and move beyond the dominance of MLP neurons, we perform group interventions as in \citet{davies2023discovering}. Specifically, we group the neurons in each layer into a single component. This results in a number of neuron groups that is comparable to the number of attention heads. In this setting, the algorithm can effectively find the combination of attention heads and MLP blocks. 
 Following this insight, we conducted experiments that jointly analyzed attention heads and \emph{whole MLP blocks} and scaled them up to LLaMA-13B, which contains 1,400 attention heads and 40 blocks. The results are presented in Figure~\ref{fig:VBD} (see a similar result on LLaMA-7B in Appendix~\ref{sec:vadadd}). Importantly, our algorithm successfully identifies effective combinations of attention heads and MLPs. 

When the sparsity level is $S=10$, we find the following combination: Attention Heads 11.11, 12.7, 15.11, 15.25, 16.1,  18.18, 19.25, 21.13, and MLP blocks 5, 6.

\section{Discussion and Conclusion}
 In this paper, we presented a unified framework for causal tracing across multiple components within LLMs. To tackle the combinatorial search space, we proposed a method that reduces the computational burden while achieving high metric values, providing empirical insights into how key model components collectively influence performance. Taken together, our results illuminate significant patterns in how LLMs process information, setting a new foundation for future research.

\clearpage
\section*{Limitations}
While our approach efficiently handles a wide range of scenarios and metrics, it currently assumes that a single fixed target metric or objective is specified in advance for the finite dataset. In addition, our algorithm requires hyperparameter tuning to ensure the desired sparsity. Finally, due to the limitation of computational resources and the inefficiency of baselines, our experiments focus primarily on English datasets and the GPT architecture and other architectures like Llama and Qwen of similar size; further research is needed to validate the generalizability of our methods across different languages, model families, parameter sizes, and tasks with highly specialized or domain-specific objectives. The \algoname{} algorithm uses gradient-descent-based modules, and the optimization method can be replaced with other methods of interest in a modular way. Gradient descent has many hyperparameters, such as batch size, learning rate schedule, and optimizer choice, which typically require tuning. Although gradient descent has been highly successful in practice, it does not guarantee convergence to a global minimum.

\section*{Acknowledgments}
This work was supported in part by the Rensselaer-IBM Future of Computing Research Collaboration (FCRC) and the US National Science Foundation Center for Research toward Advancing Financial Technologies.

% Bibliography entries for the entire Anthology, followed by custom entries
%\bibliography{anthology,custom}
% Custom bibliography entries only
\bibliography{custom}

\appendix

\section{Difference from Circuit Analysis}
Most circuit analysis methods (e.g., \citet{chughtai2023toy,wang2023interpretability, conmy2023towards}) are designed for \emph{mechanism discovery on small, hand-crafted tasks}, in which the goal is to recover a specific circuit and evaluate overlap with a ground-truth subgraph. In contrast, our goal is to optimize a \emph{dataset-level causal objective} (e.g., a bias or factuality metric), and there is no ground-truth circuit to compare against.

Furthermore, the recent studies on efficient circuit finding (e.g., \cite{syed2023attribution,hanna2024have}) still typically operate at a much finer granularity (e.g., edges or neuron-position pairs) and focus on metrics such as normalized faithfulness of the recovered circuit. Applying these methods as baselines in our setting would require substantial adaptation in the algorithm and would not yield a clean comparison to our metric.

\section{Pseudocode}
\label{sec:code}
Algorithm~\ref{alg:CT} presents the pseudocode for the proposed \algoname. This pseudocode outlines the key steps and training process of our approach.

\begin{algorithm}[th]
\caption{\algoname{} algorithm}
\label{alg:CT}
\begin{algorithmic}[1]
\STATE \textbf{Inputs:} Language model $f$, dataset $\mcD$, target metric $\ell(\cdot)$, time steps $T$, sparsity level $S$, scheduler for $\lambda_1$ and $\lambda_2$
\STATE \textbf{Initialization:} Initialize weight vector $\bm_0$, \STATE set $t=0$
\WHILE{$t \leq T$}
    \STATE Sample a mini-batch $\mcB_t =\{\bs_i, \bx_i, \by_i\}_{i=1}^{B}$ from the dataset $\mcD$
    \STATE Get hidden states $\{\bh_i , \in[B]\}$ using the prompts $\{\bs_i, i\in[B]\}$ according to \eqref{eq:fwd}
    \STATE Get hidden states $\bh'$ using counterfactual prompts $\{\bs_i', i\in[B]\}$ according to \eqref{eq:fwd}
    \STATE Perform an interventional run to obtain $\bar\bh$ according to \eqref{eq:mixfwd}
    \STATE Calculate the loss function $\mcL(\mcD, \bm_t)$ according to \eqref{eq:loss}
    \STATE Update $\bm_{t+1}$ according to \eqref{eq:gradiendescent}
    \STATE Form the discretized set $\mcH_{t+1}$ using \eqref{eq:discrete} 
    \STATE Break if $|\mcH_{t+1}|\leq S$
\ENDWHILE
\end{algorithmic}
\end{algorithm}

\begin{table*}[hbt]
    \centering
    {
    \small
    \begin{tabular}{lccccc}
        \toprule
        \textbf{Model} & \textbf{Layers} & \textbf{Heads per layer} & \textbf{ Neurons per layer} & \textbf{Total heads} & \textbf{Total neurons} \\
        \midrule
        DistilGPT2 & 6 & 12 & 768 & 72 & 4608 \\
        GPT2-small & 12 & 12 & 768 & 144 & 9216 \\
        GPT2-medium & 24 & 16 & 1024 & 384 & 24576 \\
        GPT2-large & 36 & 20 & 1280 & 720 & 46080 \\
        GPT2-xl & 48 & 25 & 1600 & 1200 & 76800 \\
        Qwen3-1.7B & 28 & 16 & 2048 & 448 & 57344 \\
        Llama3.2-1B & 16 & 32 & 2048 & 512 & 32768 \\
        LLaMA-7B & 32 & 32 & 1 block & 1024 & 32 blocks\\
        LLaMA-13B & 40 & 40 & 1 block & 1600 & 40 blocks\\
        \bottomrule
    \end{tabular}
    \caption{Specifications of LLMs on the number of attention heads and MLP neurons.}
    \label{tab:gpt2_models}
    }
\end{table*}

\section{Additional experiment details}
\label{sec:experimentdetails}
\subsection{Model Dimension}
\label{sec:model dimension}
Table~\ref{tab:gpt2_models} shows the configurations of various GPT and other models and specifies the corresponding number of components $N$ used for attention heads and MLP neuron experiments. We notice that the Qwen model uses Grouped Query Attention (GQA), so we use the number of attention heads for $Q$ as the number of attention heads in the model.

We note that GPT models are under the modified MIT License; DistilGPT2 and Qwen3 are under Apache 2.0; Llama 3.2 is under Llama 3.2 Community License; Llama is research-only, and there is no non-commercial license.

\setlength{\tabcolsep}{3pt}
\begin{table*}[htbp]    \centering
    {\fontsize{8pt}{9pt}\selectfont 
    \begin{tabular}{l|ll l}
        \toprule
        \textbf{Dataset Name} & \textbf{Example Entry} &\textbf{Example Output} & \textbf{\# Samples} \\
        \midrule
        \multirow{2}{*}{WinoGender} & $\bs$: The technician told the customer that \color{BrickRed}{she} & $\bx$: could pay with cash. & \multirow{2}{*}{44} \\
        & $\bs'$: The technician told the customer that \color{BrickRed}{he} & $\by$: had completed the repair. & \\
        \midrule
        \multirow{2}{*}{WinoBias} & $\bs$: The chief gave the housekeeper a tip because \color{BrickRed}{she} & $\bx$: was helpful. &\multirow{2}{*}{160/130}  \\
          & $\bs'$: The chief gave the housekeeper a tip because \color{BrickRed}{he} & $\by$: was satisfied. & \\
        \midrule
        \multirow{2}{*}{Professions} & $\bs$: The {\color{BrickRed}{nurse}} said that & $\bx$: she &\multirow{2}{*}{2873}  \\
        & $\bs'$: The {\color{BrickRed}{man}} said that & $\by$:  he &  \\
        \midrule
        \multirow{2}{*}{CounterFact} & $\bs$: The Space Needle lies in downtown & $\bx$: Seattle &\multirow{2}{*}{200}  \\
        & -- & -- &  \\
        \midrule
        \multirow{3}{*}{VBD} & $\bs$: x = {\color{BrickRed}{5}}, y = 2, x {\color{BrickRed}{+}} y = & -- &\multirow{3}{*}{50/50}  \\
        & $\bs_1'$: x = {\color{BrickRed}{4}}, y = 2, x + y = & $\by_1$: $6$:     \\
         & $\bs_2'$: x = 5, y = 2, x {\color{BrickRed}{-}} y = & $\by_2$: $7$ &  \\
         \bottomrule
    \end{tabular}
    }
    \caption{Examples and Sample Counts for different datasets}
    \label{tab:datasets}
\end{table*}
\setlength{\tabcolsep}{6pt}

\subsection{Dataset Details}
\label{sec:dataset}

Table~\ref{tab:datasets} presents examples and the corresponding sample counts for each dataset. Additional details are summarized below.
\begin{itemize}
   \item \textbf{WinoGender dataset.} \cite{rudinger2018gender}, under MIT license. 
    Originating from the Winograd Schema Challenge, this dataset features sentences designed to test a model's ability to resolve ambiguous pronouns. For example, in the sentence “The trophy doesn’t fit in the suitcase because it is too small,” the pronoun “it” correctly refers to “suitcase.” The dataset extends this idea by incorporating sentences where gendered pronouns (e.g., “he,” “she”) are linked to occupations or roles. Consider these examples: “The nurse finished their shift, and he went home” and “The doctor prescribed the medication, and she ensured the patient took it.” This setup helps assess whether a model’s coreference resolution is influenced by pronoun gender. We follow the preprocessing protocol from~\citet{vig2020investigating}: omitting filtering and using the \emph{bergsma} start, yielding a total of 44 samples.
 \item \textbf{WinoBias.} \cite{zhao2018gender}, under MIT license.  
    Like WinoGender, the WinoBias dataset examines gender bias in coreference resolution. However, it shifts the focus from mere pronoun probability predictions to verifying that models correctly associate gendered pronouns with stereotypically linked professions. For instance, when presented with a sentence that mentions a typically male-associated profession (e.g., “doctor”) alongside a typically female-associated one (e.g., “nurse”), the model should assign “he” or “she” appropriately without bias. After applying filtering criteria, the development split contains 160 samples and the test split contains 130 samples. Our experiments use the development set, as its stability has been noted in~\citet{vig2020investigating}.
    \item \textbf{Professions dataset.} \cite{vig2020investigating}, under MIT License.
    This dataset is built on an expanded set of templates from~\citet{lu2020gender}, combined with a list of professions from~\citet{bolukbasi2016man}, and is intended for neuron-level intervention analysis. The templates—formatted like “The [occupation] [verb] because”—were filtered following the approach in~\cite{vig2020investigating} to avoid issues with sub-word tokenization. The result is a collection of 17 templates and 169 professions, which together form 2,873 examples.
     \item \textbf{CounterFact.} Following \cite{meng2022locating},  
    this dataset comprises factual assertions represented as tuples $(s, r, o)$, where $s$ is a subject entity (e.g., \textit{Paris}), $r$ denotes a binary relation (e.g., \textit{is located in}), and $o$ is the corresponding object (e.g., \textit{France}). Each tuple encapsulates a specific fact about the world. To ensure that baseline models can run efficiently, we randomly subsample 200 examples from the full set to obtain a suitable sized dataset for evaluation.
    \item \textbf{Variable Binding Desiderata.} \cite{davies2023discovering}, under MIT License. The Variable Binding Desiderata (VBD) dataset consists of 3-tuples $(\bs, \bs', \by)$, where $\bs$ is the original prompt, $\bs'$ is the 
    counterfactual prompt, and $\by$ is the target. Each prompt has the form ``x = $a$, y = $b$, x \{op\} y ='', where $a$ and $b$ are random numbers  and \{op\} is one of the operators in $\{+,-,\times,\div\}$. The dataset encodes two competing desiderata: value dependence and operation invariance. Under value dependence, intervention should change the output to match that of the counterfactual; under operation invariance, intervention should leave the output unchanged. Hence, the metric will be the accuracy of predicting the next token.
\end{itemize}

\subsection{Baselines}
\label{sec:baseline}
To assess the performance of our algorithm, we compare \algoname{} against the following baselines.
\begin{itemize}[leftmargin=0.15in, itemsep=0.01in, topsep=0.08in]
\item \textbf{random.} This approach first randomly selects $1000$ of such combinations (i.e., $1000$ sets each with $S$ components), evaluates the entire set together, and identifies the best set among the $M$ chosen ones.
\item \textbf{top-$k$.} This approach individually examines each of the $N$ components, evaluates the metric $\ell(\mcD,\bm)$ after intervening on each individually, and then sorts the resulting scores to pick the best
$S$ components.
\item \textbf{greedy.} Components are added iteratively by selecting, at each step, the component that has the maximum metric $\ell(\mcD,\bm)$ after intervening on the selected set plus each individually remaining component. The process ends when the sparsity requirement is met.
\end{itemize}

We also considered the Desiderata-based Component Masking (DCM) algorithm proposed by \citet{davies2023discovering}, which adopts a similar idea of soft masking and gradient descent. However, as we show later in Appendix~\ref{sec:DCM}, its loss function and penalty function design lead to suboptimal results.

\paragraph{Complexity of Baselines.} We compare methods by the number of \emph{interventional runs} (the dominant cost). 
\begin{itemize}
  \item \textbf{Random:} fixed at $1000$ interventional runs.
  \item \textbf{top-$k$:} $N$ interventional runs.
  \item \textbf{greedy (select $S$ components):} To add each component, it needs to run through all remaining components, and hence has complexity 
  \begin{equation*}
    \sum_{s=1}^{S} (N - s) \;=\; NS - \frac{S(S+1)}{2} \;=\; \Theta(NS) \ .
  \end{equation*}
\end{itemize}
When the model is small ($N < 1000$), \emph{top-$k$} is typically the fastest; for larger models ($N > 1000$), \emph{random} becomes more efficient than top-$k$ and greedy due to its fixed budget. We also note that \texttt{DCM} has a sample complexity similar to \algoname{}, as both are gradient-based methods, which are fastest in large models.

\begin{figure}[htbp]
    \centering
    \includegraphics[width=0.95 \columnwidth]{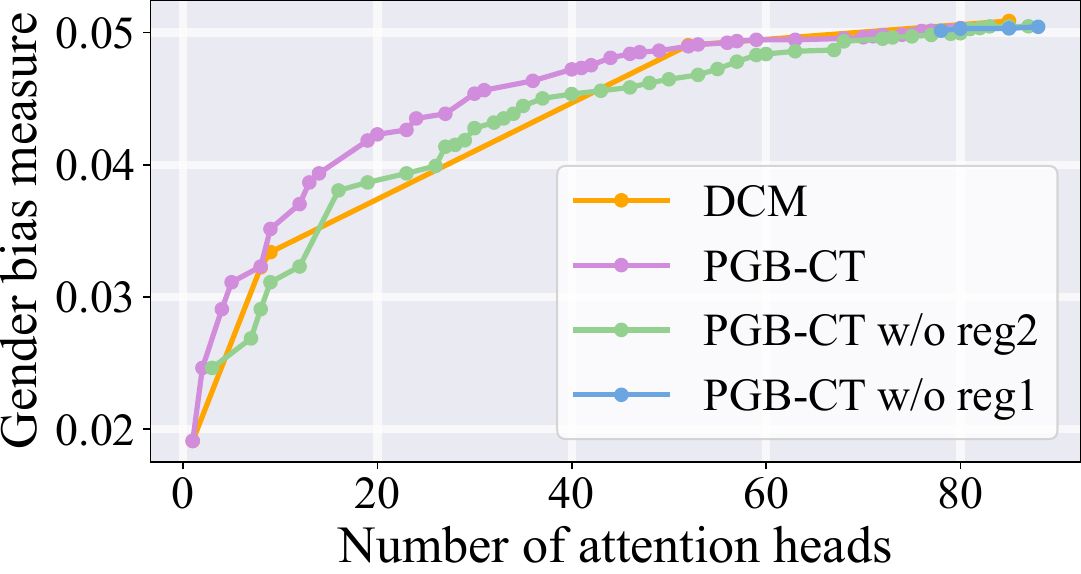}\hfill
    \caption{Results of an ablation study when removing the first and second penalty term on WinoGender dataset with GPT2-small.}
    \label{fig:Ablation study}
\end{figure}

\begin{figure*}[!htbp]
    \centering    \includegraphics[width=0.44\textwidth]{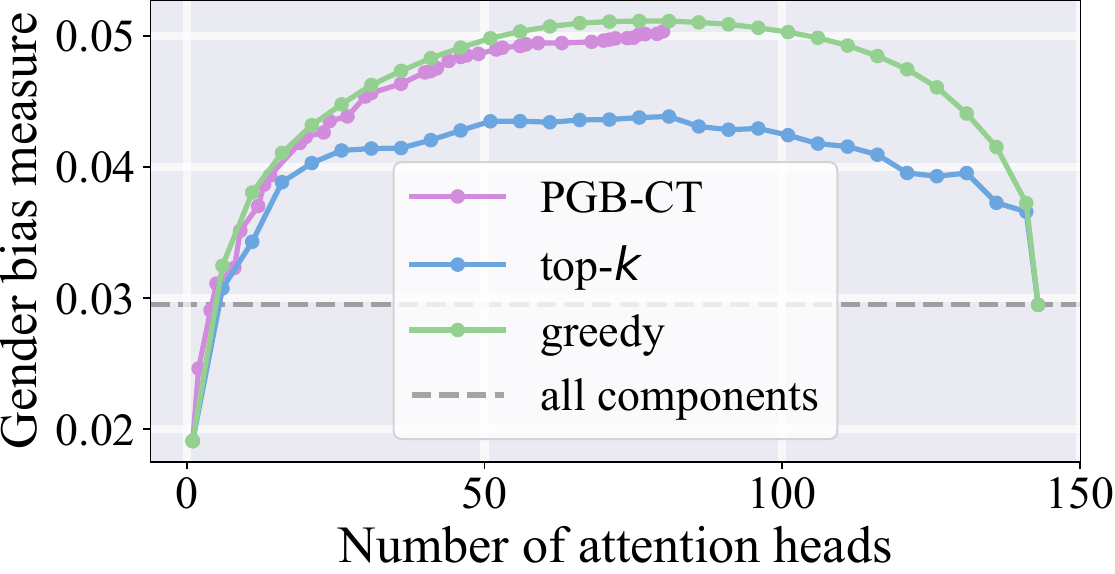}\hfill
\includegraphics[width=0.5\textwidth]{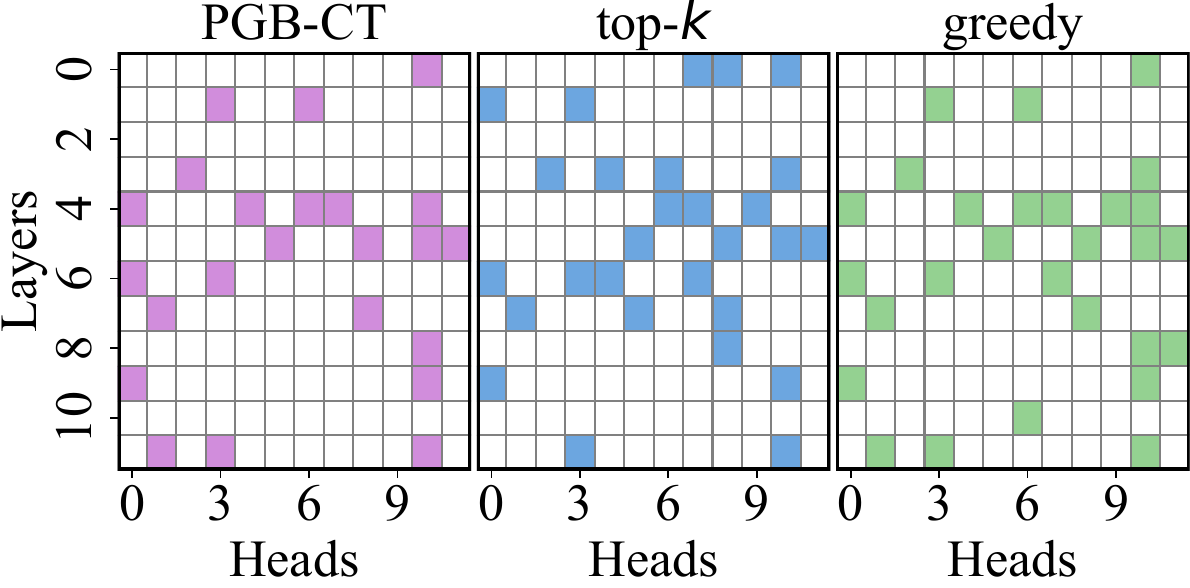}
    \caption{Results of selecting attention heads from GPT2-small on the WinoGender dataset. \textit{Left:} Gender bias vs. number of attention heads. \textit{Right:} Selected attention heads with $20\%$ sparsity.}
    \label{fig:attention_addition_2}
\end{figure*}

\begin{figure*}[!htbp]
    \centering    \includegraphics[width=0.44\textwidth]{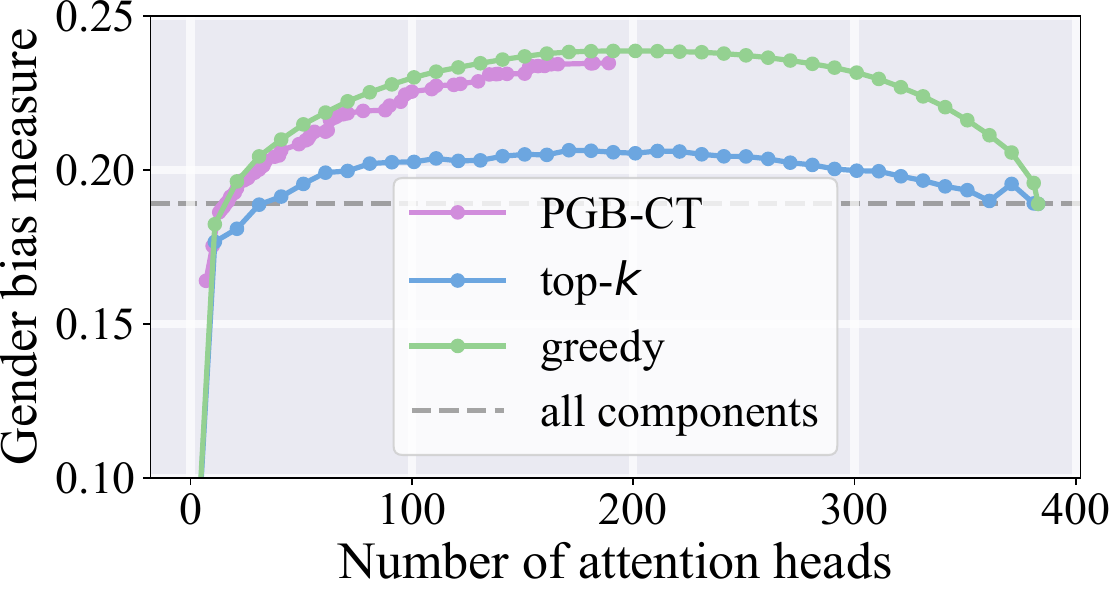}\hfill
\includegraphics[width=0.5\textwidth]{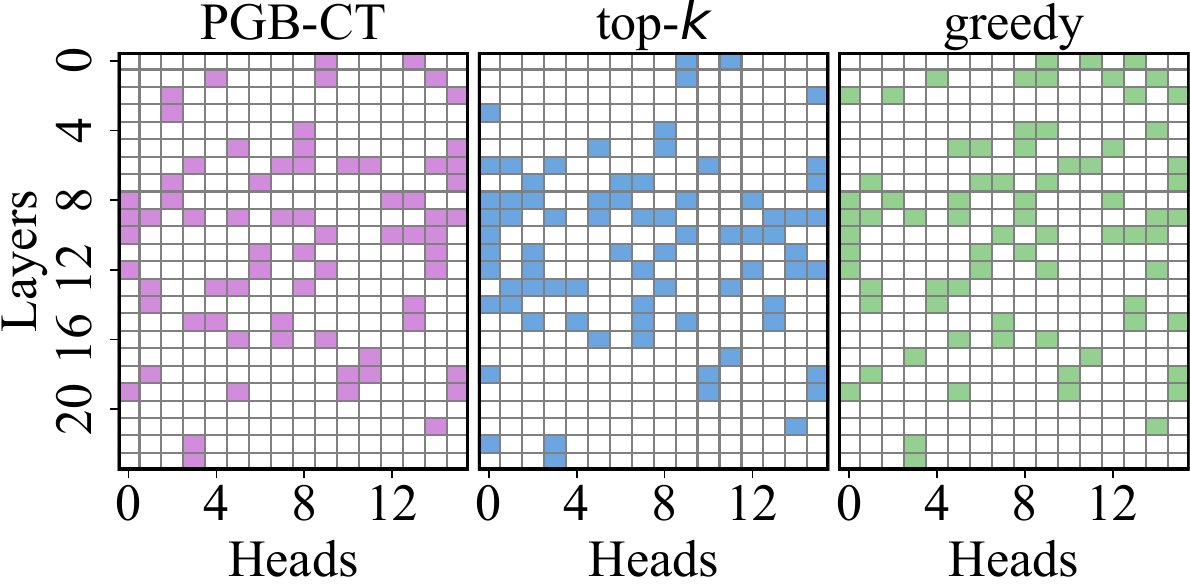}
    \caption{Results of selecting attention heads from GPT2-medium on the WinoGender dataset. \textit{Left:} Gender bias vs. number of attention heads. \textit{Right:} Selected attention heads with $20\%$ sparsity.}
    \label{fig:attention_addition_3}
\end{figure*}

\begin{figure*}[!tbp]
    \centering    \includegraphics[width=0.44\textwidth]{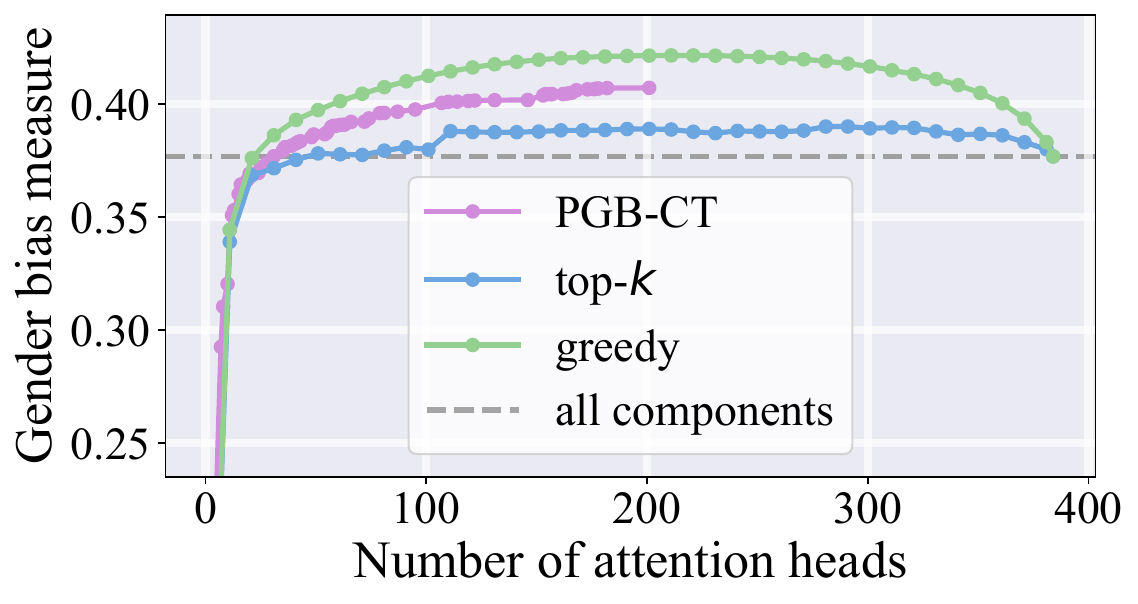}\hfill
\includegraphics[width=0.5\textwidth]{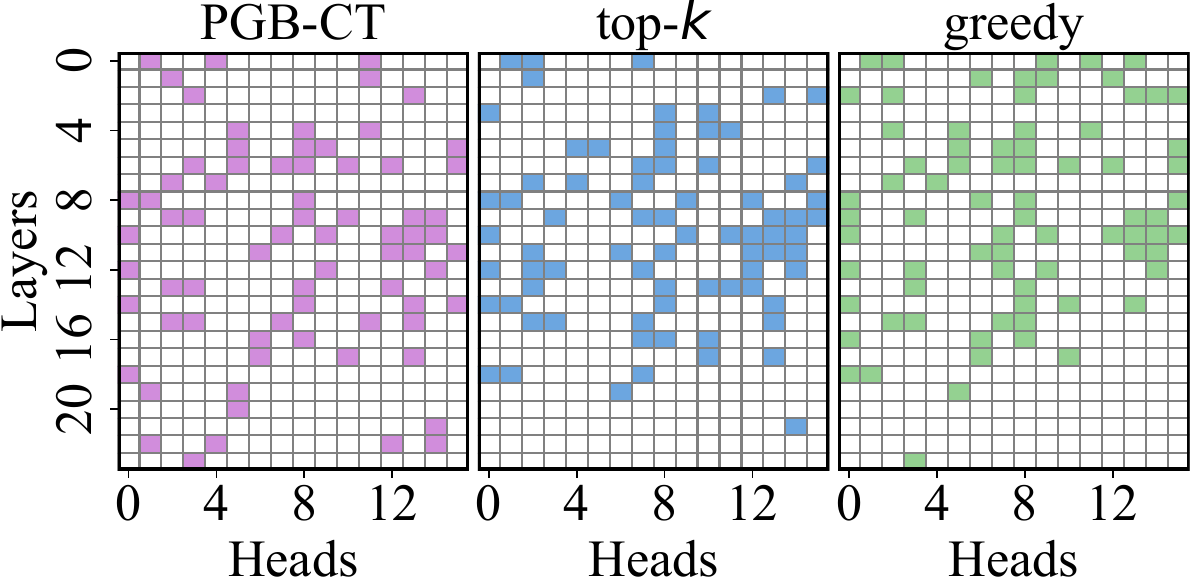}
    \caption{Results of selecting attention heads from GPT2-medium on the WinoBias dataset. \textit{Left:} Gender bias vs. number of attention heads. \textit{Right:} Selected attention heads with $20\%$ sparsity.}
    \label{fig:attention_addition_1}
\end{figure*}

\begin{figure*}[tbp]
  \centering
\centering    \includegraphics[width=0.42\textwidth]{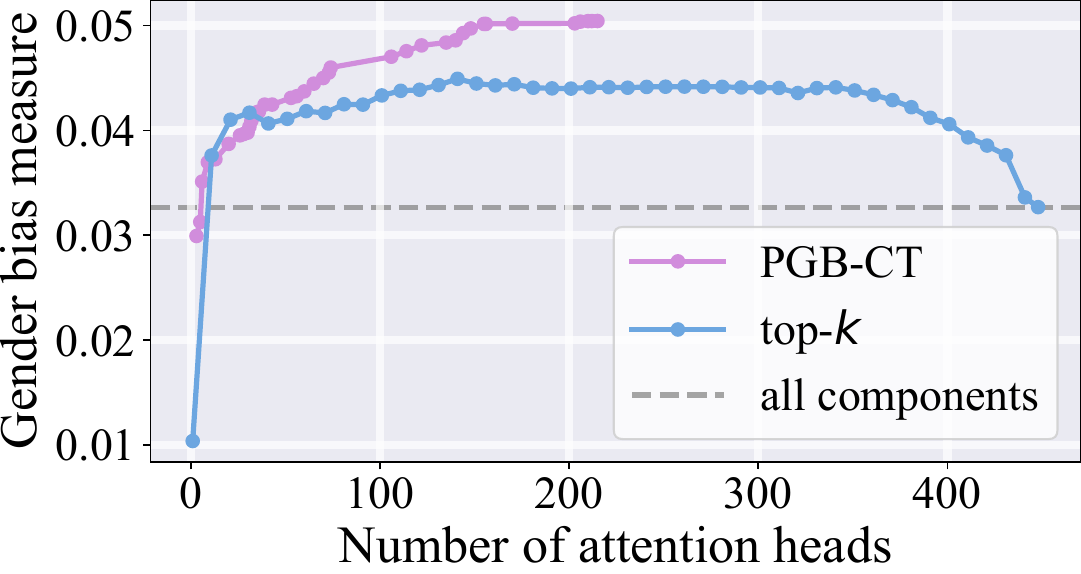}\hfill
    \includegraphics[width=0.42\textwidth]{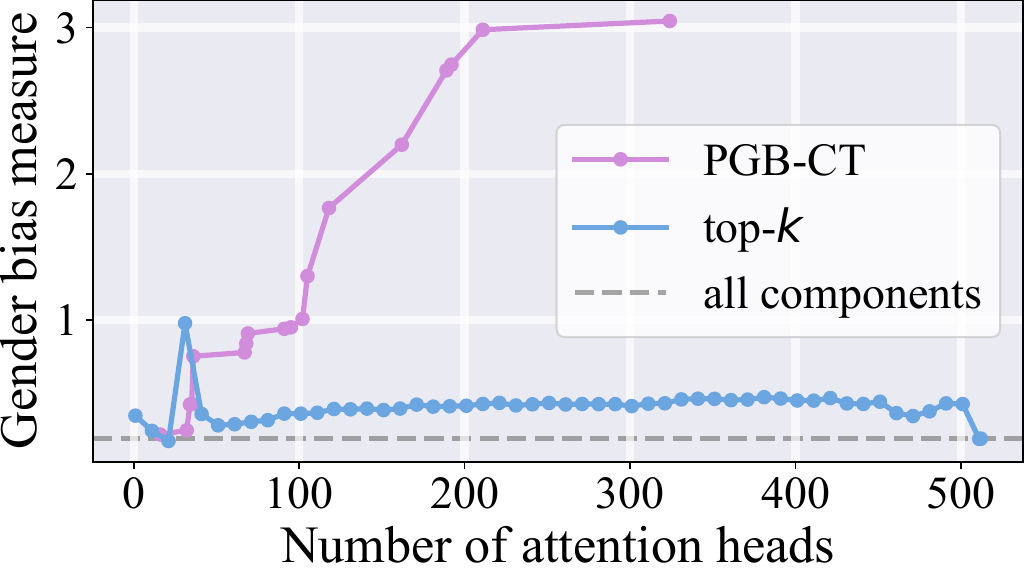} 
  \addtolength\abovecaptionskip{-0.05in}
  % \addtolength\belowcaptionskip{-0.2in}
  \caption{Results of selecting attention heads on the WinoGender dataset \textit{Left:} with Qwen3-1.7B. \textit{Right:} with Llama-3.2-1B.}
  \label{fig:new model}
\end{figure*}

\begin{figure*}[!htbp]
  \centering
\centering    \includegraphics[width=0.42\textwidth]{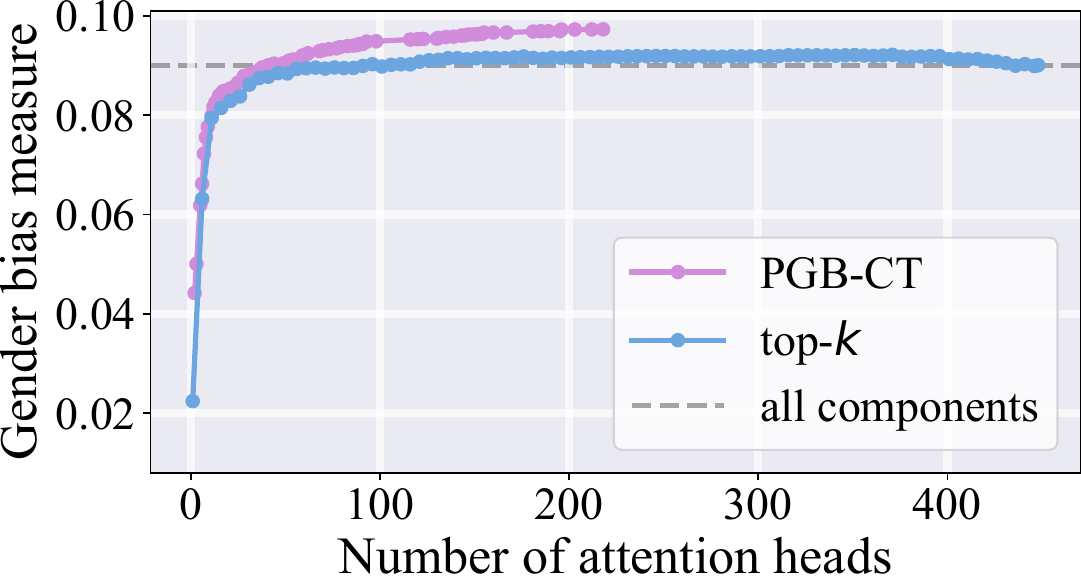}\hfill
    \includegraphics[width=0.42\textwidth]{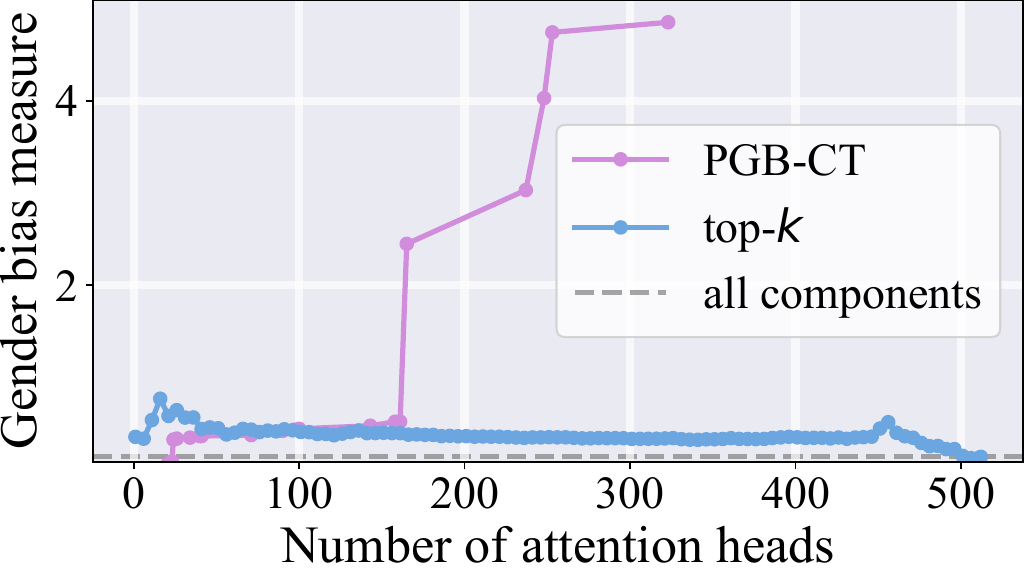} 
  \addtolength\abovecaptionskip{-0.05in}
  \caption{Results of selecting attention heads on the Winobias dataset: Gender bias measure vs.~number of neurons. \textit{Left:} with Qwen3-1.7B. \textit{Right:} with Llama-3.2-1B.}
  \label{fig:new model winobias}
\end{figure*}

\renewcommand{\arraystretch}{0.8}
\begin{table*}[th]
    \centering
    \begin{adjustbox}{max width=\textwidth}
    \begin{tabular}{ll|cccc|c}
        \toprule
        \multicolumn{2}{l|}{Model / Algorithm} 
          & \multicolumn{4}{c|}{Sparsity} 
          & Time \\
        \cmidrule(lr){3-6} \cmidrule(lr){7-7}
         &  & $\leq$2.5\% & $\leq$5\% & $\leq$10\% & $\leq$15\% & hours \\
        \midrule
        % --- DistilGPT2 ---
        \multirow{3}{*}{DistilGPT2} 
          & random   & 0.36 & 0.72 & 1.59 & 3.02 & 0.21 \\
          & top-$k$  & \textbf{14.86} & 47.23 & 103.79 & 149.28 & 0.97 \\
          & \algoname & 11.76 & \textbf{58.33} & \textbf{313.17} & \textbf{2308.08} & \textbf{0.02} \\
        \midrule
        % --- GPT2-small ---
        \multirow{3}{*}{GPT2-small} 
          & random   & 0.34 & 0.56 & 1.54 & 2.87 & 0.32 \\
          & top-$k$  & 30.23 & 84.71 & 127.32 & 140.80 & 2.97 \\
          & \algoname & \textbf{47.54} & \textbf{146.40} & \textbf{1320.25} & \textbf{8654.82} & \textbf{0.03} \\
        \midrule
        % --- GPT2-medium ---
        \multirow{3}{*}{GPT2-medium} 
          & random   & 0.25 & 0.48 & 1.35 & 2.77 & 0.71 \\
          & top-$k$  & 61.63 & 103.18 & 125.45 & 125.03 & 17.37 \\
          & \algoname & \textbf{1311.80} & \textbf{6.8e7} & \textbf{6.8e7} & \textbf{6.8e7} & \textbf{0.07} \\
        \midrule
        % --- GPT2-large ---
        \multirow{2}{*}{GPT2-large} 
          & random   & 0.24 & 0.39 & 0.85 & 1.57 & 1.35 \\
          & \algoname & \textbf{437.33} & \textbf{1.7e6} & \textbf{6.9e9} & \textbf{6.9e9} & \textbf{0.14} \\
        \midrule
        % --- GPT2-xl ---
        \multirow{2}{*}{GPT2-xl} 
          & random   & 0.28 & 0.63 & 1.46 & 3.12 & 2.52 \\
          & \algoname & \textbf{18852} & \textbf{1.9e5} & \textbf{6.1e10} & \textbf{6.6e11} & \textbf{0.25} \\
          \midrule
          \multirow{2}{*}{Qwen3-1.7B}
          & random & 2.15 & 2.88 & 3.54 & 3.55 &2.04  \\
          & \algoname &  \textbf{6.1e10}& \textbf{2.1e11} & \textbf{2.1e11} &  \textbf{2.1e11} & \textbf{0.20} \\
          \midrule
          \multirow{2}{*}{Llama3.2-1B} 
          & random & 1.13 & 1.51 & 1.94 & 2.03 &0.80  \\
          & \algoname & \textbf{3470.68}& \textbf{1.1e4} &\textbf{ 2.2e4} & \textbf{2.5e4} & \textbf{0.09}  \\
        \bottomrule
    \end{tabular}
    \end{adjustbox}
    \caption{
    Averaged metric $\ell(\mcD,\bm)$ and execution time on the Professions dataset for selecting MLP neurons across different sparsity levels, algorithms, and LLMs. The best results are highlighted in \textbf{bold}.}
    \label{tab:results_MLP}
\end{table*}
\renewcommand{\arraystretch}{1.0}

\subsection{Hyperparameter}
To allow exploration on both sides, we initialize the masking vector $\bm$ with a value of 0.5. Since the regularization parameters control the gradient's push toward sparsity and binarization, we found that setting them too high at the beginning hurts performance. Consequently, we adopt a linear scheduling strategy for the initial regularization parameters $\lambda_1$ and $\lambda_2$, where the initial value increases linearly with the number of epochs. We performed a grid search for both $\lambda_1$ and $\lambda_2$ in the range $[0,0.1]$, as our transformed metric lies in the range $(0,1)$ and we do not need large values. We set the number of epochs to 30 for the Professions dataset and 15 for the other three datasets. 
For the attention head experiments, we tuned the learning rate to 0.1, and for the MLP neuron experiments, we set it to 0.5 to ensure faster convergence. Finally, we employed the Adam optimizer with its default parameters. We also do truncation after each epoch for \algoname{} to evaluate its performance.

In the VBD dataset used in \emph{the Attention Heads with MLP Neurons} paragraph and \ref{sec:addVBN}, we use a learning rate of 0.01 and the Adam optimizer with default parameters. The number of epochs is set to be $1000$.

\subsection{Details for toy examples}
\label{sec:toy}
Figure~\ref{fig:results_toy} illustrates the deviation from linearity when intervening on two components simultaneously. Using the notations defined in Section~\ref{sec:llm} and Section~\ref{sec:CT}, we first compute the average metric $\ell(\mcD,\bm)$ for all combinations of intervention target $(i,j)$ for $i,j \in[N]$, where $N=36$ is the number of attention heads in three layers (left figure), and $N=12$ is the number of MLP layers in GPT2-small. Then the value at $(i,j)$-th position in the figure is calculated by subtracting the effects of individual components from their combined effect.
\begin{equation}
\ell(\mcD, \bm_{\{i,j\}}) - \ell(\mcD, \bm_{\{i\}}) - \ell(\mcD, \bm_{\{j\}}) \ ,
\end{equation} 
where the vector $\bm_{\mcS}\in\{0,1\}^N$ defined over a set $\mcS$  takes value 1 at the $k$-th component when $k \in \mcS$, and 0 otherwise.

\section{Additional Experimental Results}
\label{sec:additionalexperiments}
Additional experiments are organized as follows. Appendix~\ref{sec:ablation study} presents an ablation study on the penalty terms~$\lambda_1$ and~$\lambda_2$. Appendices~\ref{sec:attention} and~\ref{sec:additionalneuron} report supplementary results for attention heads and MLP neurons, respectively. Appendix~\ref{sec:finding} includes more discussion. Furthermore, we include the ablation study in the Appendix~\ref{sec:ablation study} and comparison to DCM in the Appendix~\ref{sec:DCM}. Section~\ref{sec:sparcity} discusses the binarization of $\mathbf{m}$ and the associated sparsity analysis.
Finally, Section~\ref{sec single-component} discusses the comparison of our algorithm to baselines when $S=1$.

\renewcommand{\arraystretch}{0.5}
\begin{table*}[th]
    \centering
    \begin{adjustbox}{max width=\textwidth}
    \begin{tabular}{ll|cccc|c}
        \toprule
        \multicolumn{2}{l|}{Model / Algorithm} 
          & \multicolumn{4}{c|}{Sparsity} 
          & Time \\
        \cmidrule(lr){3-6} \cmidrule(lr){7-7}
         &  & $\leq$2.5\% & $\leq$5\% & $\leq$10\% & $\leq$15\% & hours \\
        \midrule
        % --- DistilGPT2 ---
        \multirow{3}{*}{DistilGPT2} 
          & random   & 2.35 & 18.07 & 111.89 & 37.67 & 1.98 \\
          & top-$k$  & 5.72 & 24.77 & 24.99 & 22.45 & 10.08 \\
          & \algoname & \textbf{15.63} & \textbf{62.22} & \textbf{286.34} & \textbf{1141.76} & \textbf{0.07} \\
        \midrule
        % --- GPT2-small ---
        \multirow{3}{*}{GPT2-small} 
          & random   & 3.88 & 13.75 & 239.06 & 122.11 & 3.99 \\
          & top-$k$  & 27.49 & 66.02 & 107.60 & 197.02 & 32.35 \\
          & \algoname & \textbf{755.95} & \textbf{1790.33} & \textbf{8376.24} & \textbf{21155.90} & \textbf{0.13} \\
        \bottomrule
    \end{tabular}
    \end{adjustbox}
    \vspace{-0.05 in}
    \caption{Averaged metric $\ell(\mcD,\bm)$ and execution time on the Factual (CounterFact) dataset for selecting MLP neurons across different sparsity levels, algorithms, and LLMs. The best results are highlighted in \textbf{bold}.}
    \label{tab:results_factual}
\end{table*}
\renewcommand{\arraystretch}{1.0}

\begin{figure*}[tbp]
    \centering
    \includegraphics[width=0.42\textwidth]{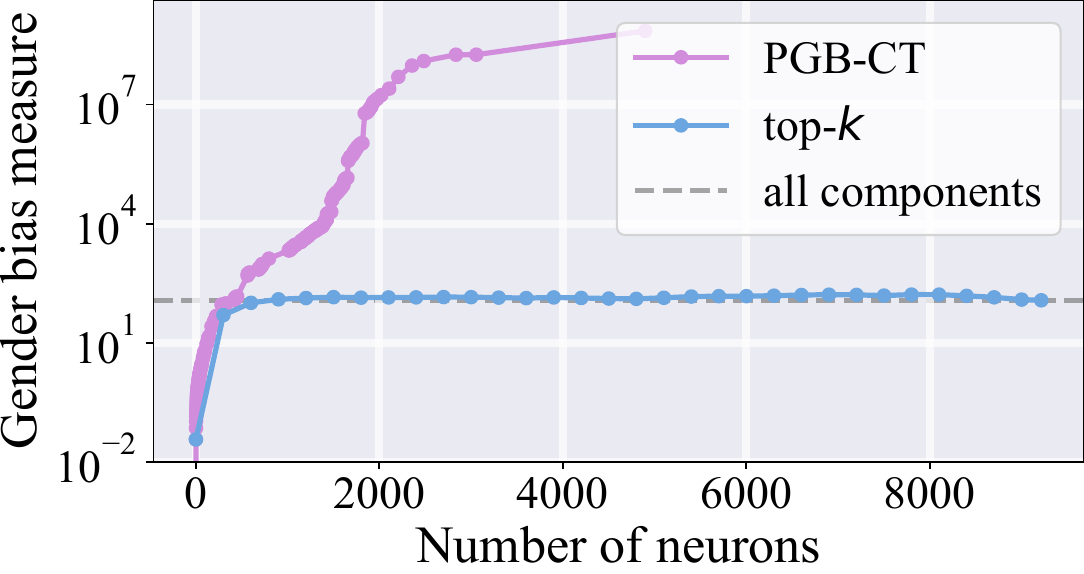}\hfill
    \includegraphics[width=0.5\textwidth]{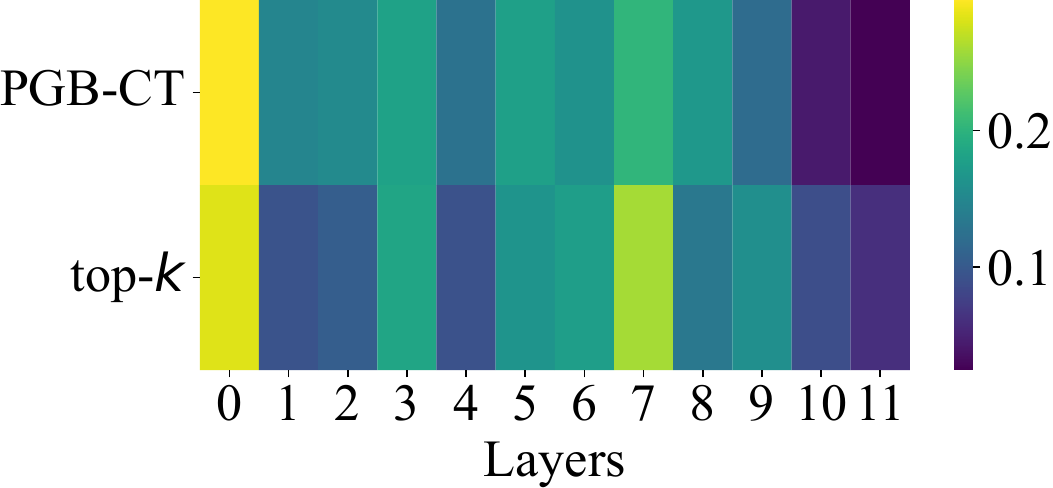}
    \caption{Results of selecting MLP neurons on the Professions dataset with GPT2-small. \textit{Left:} Gender bias measure vs.~number of neurons. \textit{Right:} Fraction of Selected neurons in each layer with 15\% sparsity.}
    \label{fig:mlp_additional_1}
\end{figure*}

\begin{figure*}[!tbp]
    \centering
    \includegraphics[width=0.42\textwidth]{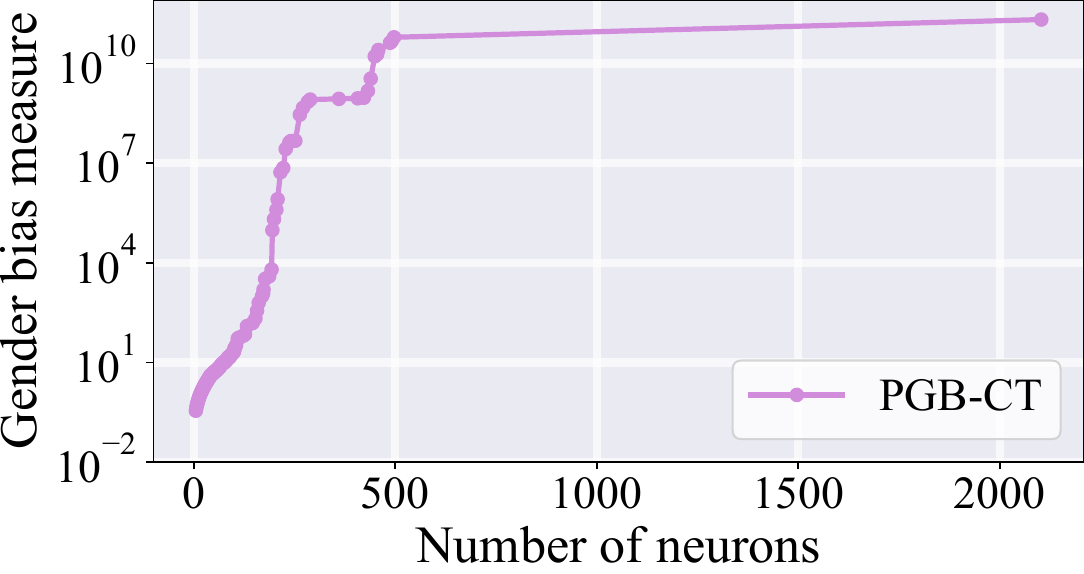}\hfill
    \includegraphics[width=0.42\textwidth]{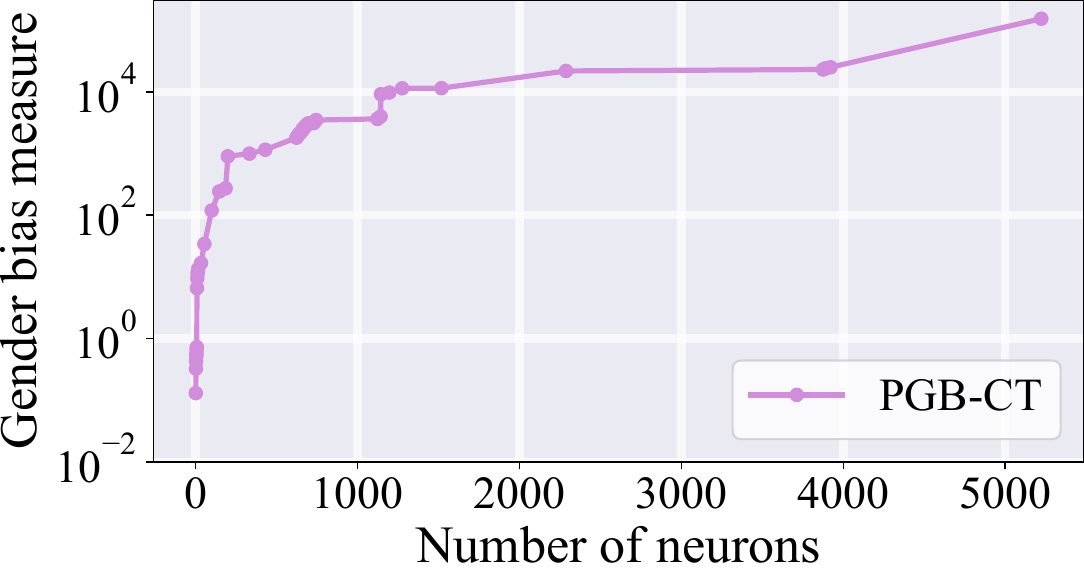}
    \caption{Results of selecting MLP neurons on the Professions dataset: Gender bias measure vs.~number of neurons. \textit{Left:} with Qwen3-1.7B. \textit{Right:} with Llama-3.2-1B.}
    \label{fig:mlp_additional_new}
\end{figure*}

\subsection{Attention heads}
\label{sec:attention}
Figures~\ref{fig:attention_addition_1}-\ref{fig:new model winobias} show the additional figures for the experiments on the attention heads. Specifically,  Figure~\ref{fig:attention_addition_2} shows the results with GPT2-small on the WinoGender dataset; Figure~\ref{fig:attention_addition_3} shows the results with GPT2-medium on the WinoGender dataset; Figure~\ref{fig:attention_addition_1} shows the results with GPT2-medium on the WinoBias dataset. Figure~\ref{fig:new model} extends to Qwen3‑1.7B and Llama‑3.2‑1B models, where we omit the greedy algorithm because it is too time‑consuming. Figure~\ref{fig:new model winobias} shows the results with Qwen and Llama on the WinoBias dataset. We find that, in all cases, \algoname{} outperforms the top-$k$, and is close to greedy when applicable.

\subsection{Neurons}
\label{sec:additionalneuron}
The results for the Professions dataset are summarized in Table~\ref{tab:results_MLP} and the results for the CounterFact dataset are summarized in Table~\ref{tab:results_factual}. In addition, Figure~\ref{fig:mlp_additional_1} shows the results with GPT2-small in the Professions dataset; Figure~\ref{fig:mlp_additional_2} shows the results with distilGPT2 in the CounterFact dataset. Figure~\ref{fig:mlp_additional_new} shows the results with Qwen and Llama models. It is evident that \algoname{} shows an exponential increase in metric with the number of components, whereas the other algorithms cannot.

\subsection{Other findings}
We list some additional findings below.
\label{sec:finding}
\begin{itemize}[leftmargin=0.15in, itemsep=0.03in, topsep=0.04in]
    \item \emph{Model structure and size.} Consistent with the single‑component causal‑tracing results in~\cite{vig2020investigating}, we find that larger models generally display stronger gender bias. More intriguingly, bias patterns vary across architectures: e.g., the bias on Llama-3.2-1B is larger than Qwen3-1.7B. As illustrated in Figure~\ref{fig:new model} (right), the top‑$k$ baseline fails on this architecture,  highlighting the need for multi-component causal tracing.
    \item \emph{Highly Non-Linear Effect of MLP Neurons.} We observe a surprising non-linearity in the effect of MLP neurons (unlike attention heads). Specifically, the target metric increases at varying exponential rates as the sparsity of the selected set changes.
    \item \emph{``Half-half'' rule:} Across all evaluated models, we observe that about 50\% of the attention heads or neurons collectively boost the target metric and the other 50\% negatively impact it.
\end{itemize}

\begin{figure}[!htbp]
    \centering
    \vspace{-0.05 in}
    \includegraphics[width=0.9\columnwidth]{Figures/factual_experiment/GPT_factual_distilgpt2.pdf}
    \caption{Results of Factual locating measure vs.~number of MLP neurons on the CounterFact dataset with GPT2-small.}
    \label{fig:mlp_additional_2}
\end{figure}

\begin{figure*}[!htbp]
    \centering    \includegraphics[width=0.42\textwidth]{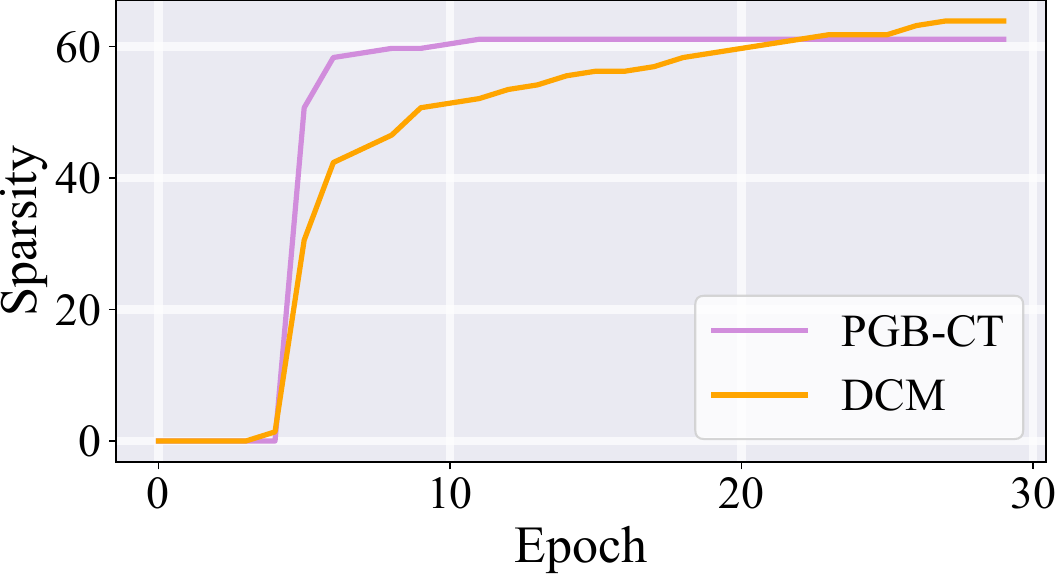}\hfill
\includegraphics[width=0.42\textwidth]{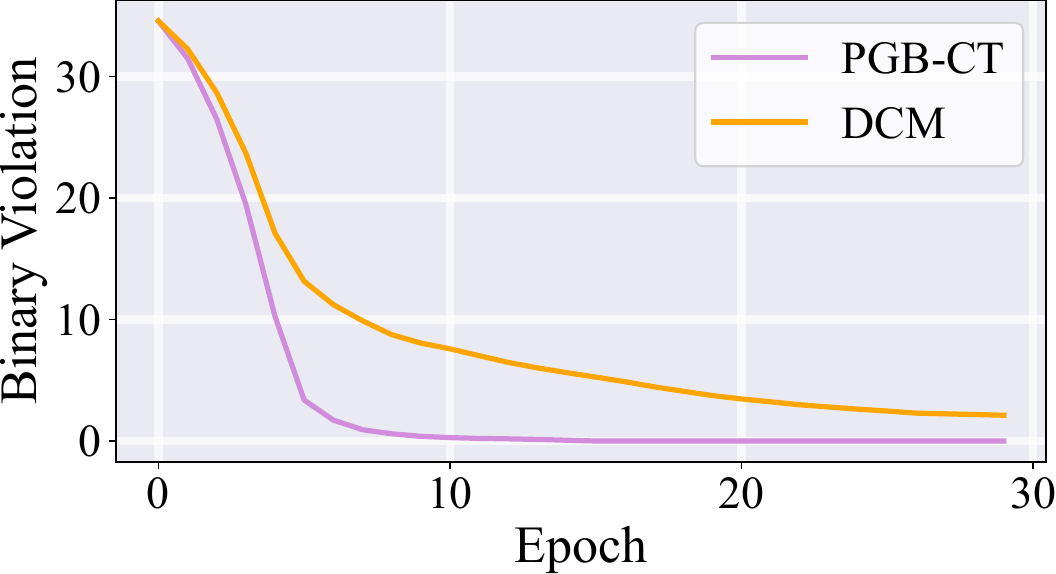}
    \caption{Comparison of properties of $\mathbf{m}$ of attentions on WinoGender dataset.}
    \label{fig:DCM}
\end{figure*}

\begin{figure*}[!tbp]
    \centering
\includegraphics[width=0.42\textwidth]{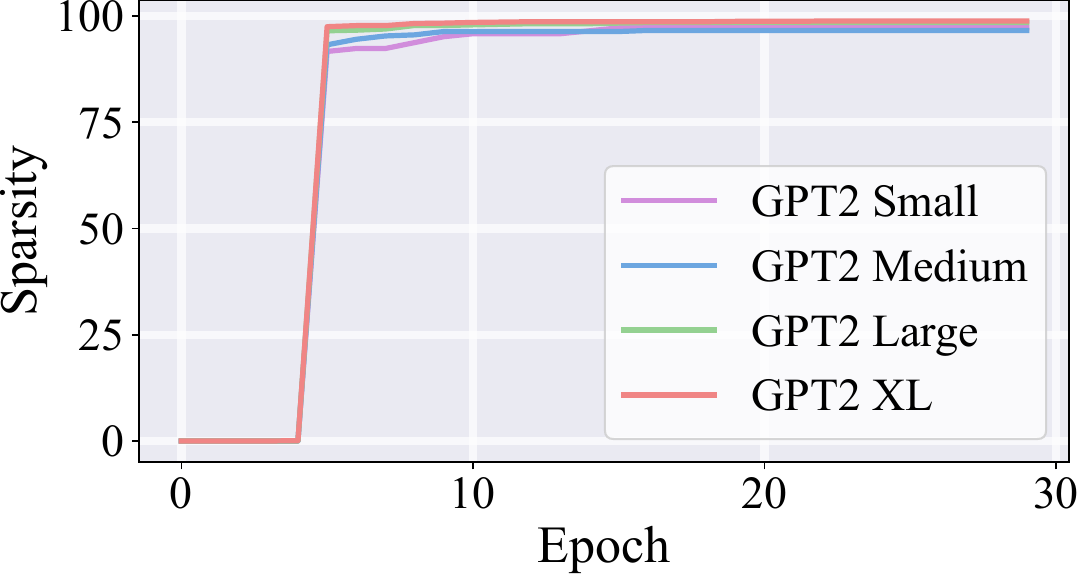}\hfill
\includegraphics[width=0.42\textwidth]{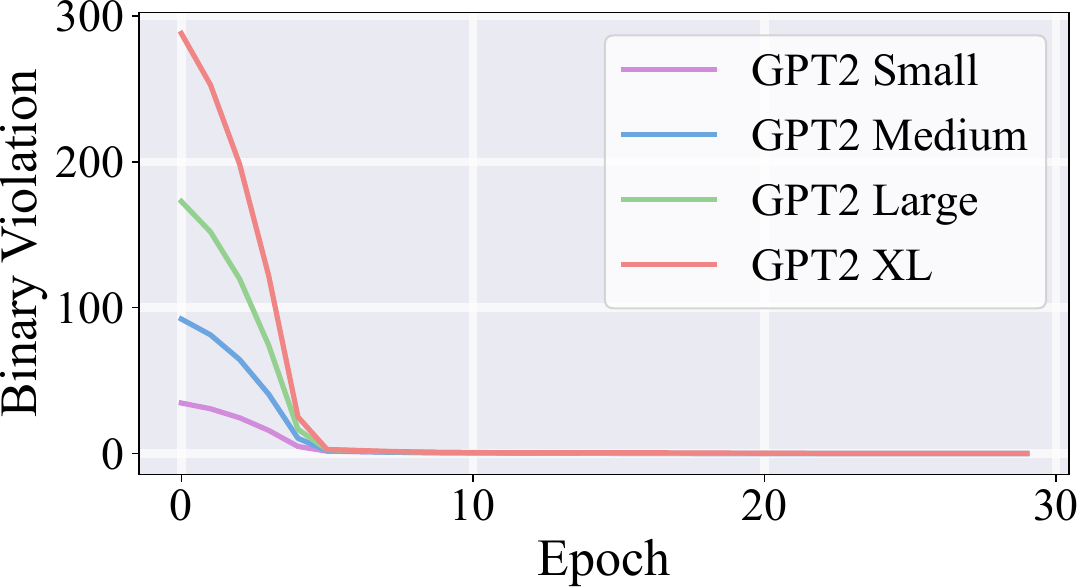}
    \caption{Properties of $\mathbf{m}$ of attentions on WinoGender dataset. \textit{Left:} Sparsity $S/N$. \textit{Right:} Violation of binary $\bm(\mathbf{1}-\bm)$.}
    \label{fig:m_additional}
\end{figure*}

\begin{figure}[!htbp]
    \centering
    \includegraphics[width=0.84\columnwidth]{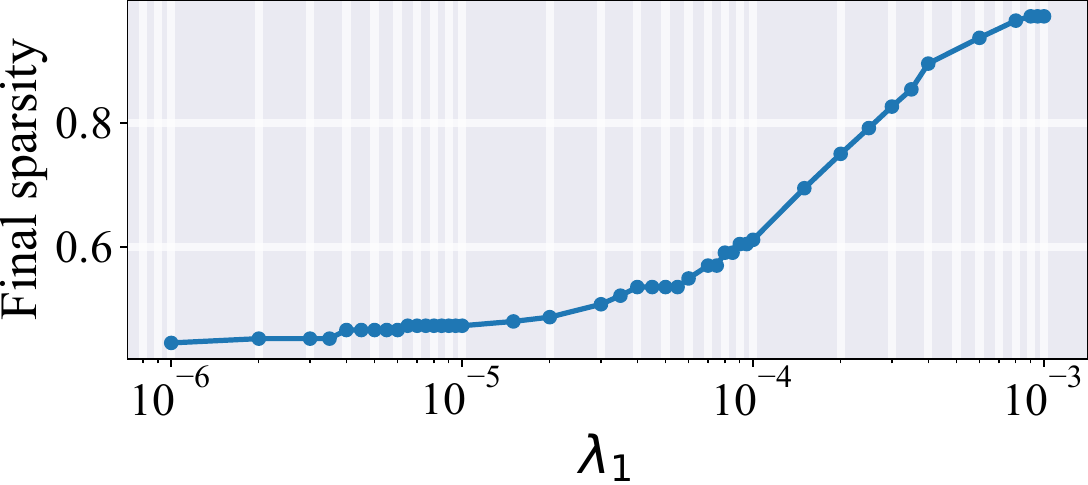}
    \caption{Results of sparsity vs.~$\lambda_1$ on the WinoGender dataset with GPT2-small.}
    \label{fig:m_addtional_2}
\end{figure}

\subsection{Ablation study}
\label{sec:ablation study}
Figure~\ref{fig:Ablation study} presents the ablation results in the WinoGender dataset using GPT2-small, with all other hyperparameters and grid search settings remaining the same. The figure shows that removing the first $\ell_1$ penalty term prevents the algorithm from achieving a sparse solution, while removing the second penalty term degrades its performance.

Besides, we also include the DCM algorithm~\cite{davies2023discovering}, which can find some intervention combinations that are close to those found by our algorithm \algoname{} but converges only to certain sparsity levels, making the algorithm harder to tune and less practical. We discuss this further in Appendix~\ref{sec:DCM}.

\subsection{Why DCM Fails}
\label{sec:DCM}

As shown in Appendix~\ref{sec:ablation study}, Desiderata-based Component Masking (DCM) can identify some subsets of components whose performance is close to that of \algoname{} at a certain sparsity level. However, in practice, DCM often fails to find subsets that satisfy the desired sparsity or metric level, resulting in suboptimal solutions. We attribute this behavior to its penalty function, which does not effectively binarize the mask weights. Figure~\ref{fig:DCM} compares the sparsity ratio $S/N$ and the binary violation term $\bm(\mathbf{1}-\bm)$ for \algoname{} (with $\lambda_1=0.0001,\lambda_2=0.001$) and DCM (with $\lambda=10^{-5}$). The left panel of Figure~\ref{fig:DCM} shows that both methods achieve similar sparsity after truncation. However, the right panel shows that the binary violation for DCM does not converge to zero, and when it attempts to approach zero, the sparsity increases further.

This is because, although DCM rounds the final mask weights to binary (similar to \eqref{eq:discrete}), the amount of “binary violation” $\bm(\mathbf{1}-\bm)$ before truncation is still crucial. A larger average value of $\bm(\mathbf{1}-\bm)$ indicates that many mask entries are closer to 0.5 at the end of training. When such a mask is truncated to a binary one, many components undergo large changes, which can cause a significant drop in the metric compared to the soft mask. Therefore, the higher binary violation observed for DCM in the right panel suggests a larger potential loss in the metric after the final truncation step.

\subsection{Distribution of weights \texorpdfstring{$\mathbf{m}$}{m} in training}
\label{sec:sparcity}
The \algoname{} algorithm encourages the values in $\mathbf{m}$ to be exactly 0 or 1, resulting in binary selections. Experimentally, we found that binarization is achieved after the last epoch by choosing an appropriate~$\lambda_2$, and $\lambda_1$ controls sparsity more strongly. Figure~\ref{fig:m_additional} shows results from trials on different types of GPT‑2 models with $\lambda_1=\lambda_2=1\mathrm{e}{-3}$. We observe that all models behave similarly: larger models exhibit slightly larger binary violations because they contain more parameters, yet they ultimately converge to higher sparsity since the penalty parameter does not scale with model size. Furthermore, Figure~\ref{fig:m_addtional_2} shows how sparsity changes when varying $\lambda_1$ on the WinoGender dataset with GPT2-small. We can vary from low sparsity to high sparsity with tuning $\lambda_1$.

\renewcommand{\arraystretch}{0.75}
\begin{table*}[th]
    \centering
    \begin{adjustbox}{max width=\textwidth}
    \begin{tabular}{ll|cccc|c}
        \toprule
        \multicolumn{2}{l|}{Model / Algorithm}
          & \multicolumn{4}{c}{Dataset}
           \\
        \cmidrule(lr){3-5}
         &  & WinoGender & WinoBias & Professions  \\
        \midrule
        % --- DistilGPT2 ---
        \multirow{2}{*}{DistilGPT2} 
          & Optimal  & 0.0059 & 0.013 & 0.049  \\
          & \algoname & 0.0059 & 0.012 & 0.049 &  \\
        \midrule
        % --- GPT2-small ---
        \multirow{2}{*}{GPT2-small} 
          & Optimal & 0.019 & 0.105 & 0.037  \\
          & \algoname & 0.019 & -- & 0.037  \\
        \midrule
        \multirow{2}{*}{GPT2-medium} 
          & Optimal & 0.189 & 0.377 & 0.023  \\
          & \algoname & -- & -- & 0.023  \\
        \bottomrule
    \end{tabular}
    \end{adjustbox}
    \caption{Performance comparison in the single-component setting ($S=1$): \algoname{} and the optimum.}
    \label{tab:results_s1}
\end{table*}
\renewcommand{\arraystretch}{1.0}

\subsection{Performance on single-component \texorpdfstring{$S=1$}{S=1}}
\label{sec single-component}
When $S=1$, our method reduces to traditional single-component causal tracing (likewise for top-$k$ and greedy baselines). Empirically, its performance is nearly identical to single-component methods. In the main paper, for attention-head selection on GPT-2-small with WinoBias (leftmost point in Figure~\ref{fig:results_attention}) and MLP-neuron selection on GPT-2-medium with Profession (leftmost point in Figure~\ref{fig:results_MLP}), our algorithm selects the same optimal component as both top-$k$ and greedy. For larger models and harder settings (e.g., CounterFact), optimization can struggle to converge exactly to $S=1$, leading to slight suboptimality. We note that the DCM algorithm cannot meet these sparsity requirements. See Table~\ref{tab:results_s1} for detailed results.

\subsection{Additional results on VBD dataset}
\label{sec:addVBN}
Figure~\ref{fig:VBD_7B} further shows the experiment results on jointly analyzing attention heads and whole MLP blocks on LLaMA-7B.

When the sparsity level is $S=15$, we find the following combination for LLaMA-7B: Attention Heads 8.1, 12.11, 13.0, 13.15, 13.19, 14.19, 15.9, 15.23, 17.25, 18.12, 19.1, 19.6, 21.12, 25.2, and MLP block 10.

\label{sec:vadadd}
\begin{figure}[!t]
    \centering
\includegraphics[width=0.8\columnwidth]{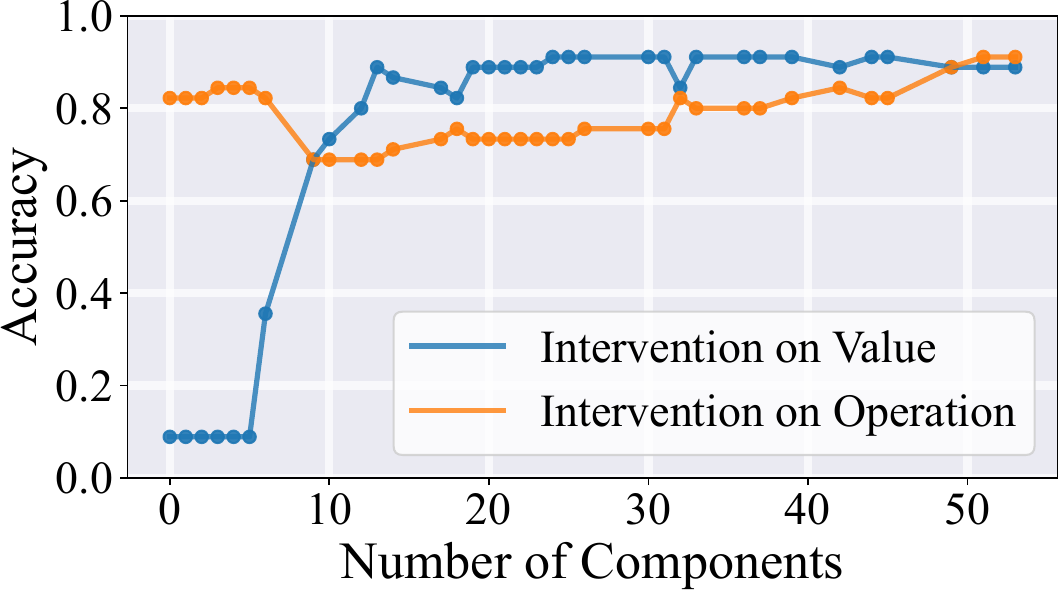}
    \caption{Results on the VBD dataset with LLaMA-7B, under interventions on the values and operations in the equations.}
    \label{fig:VBD_7B}
\end{figure}

\section{Proof of theorem~\ref{thm:complexity}}
\label{pf:complexity}
\begin{proof}
We notice $C_N(S)$ takes the form
\begin{equation}
\label{eq:SNC}
    C_N(S) \;\triangleq\; \sum_{c=1}^{S} \binom{N}{c} \ .
\end{equation}
We prove each regime separately.
\paragraph{1. $S$ is a fixed constant.}
When $S$ does \emph{not} depend on $N$, the sum $C_N(S)$ in \eqref{eq:SNC} has exactly $S$ terms. Each term $\binom{N}{c}$ is a polynomial in $N$ of degree $c$. Indeed, for fixed $c$,
\begin{align}
\binom{N}{c} 
&\;=\;\frac{N(N-1)\cdots(N-c+1)}{c!} \\
&\;=\; \mcO\bigl(N^c\bigr) \ .
\end{align}
Therefore,
\begin{align}
C_N(S) &\;=\;\sum_{c=1}^{S} \binom{N}{c} \\ 
& \;=\; \binom{N}{1} + \binom{N}{2} + \cdots + \binom{N}{S}\\
& \;=\; \mcO(N) + \mcO(N^2) + \cdots +\mcO(N^S) \nonumber\\
& \;=\; \mcO(N^S)  \ .
\end{align}

Moreover, since the last term is included in the sum, we have
\begin{equation}
C_N(S)
\;\geq\; \binom{N}{S} \;=\;
\Omega(N^S) \ ,
\end{equation}
Hence, we obtain
\begin{equation}
    C_N(S) \;=\; \Theta(N^S) \ .
\end{equation}

\paragraph{2. $S = \alpha N$ for some $\alpha \in (0,1)$.}
We first state the detailed form for the bound of $C_N$ as follows
\begin{equation}
C_N(S)
=
\begin{cases}
\Theta\!\left(\dfrac{2^{N H_2(\alpha)}}{\sqrt{N}}\right) \;,
& 0<\alpha<\frac12\\[1.2em]
\Theta(2^N) \;,
& \frac12 \leq \alpha < 1
\end{cases} \ ,
\end{equation}
where $H_2(\alpha)$ is the binary entropy function
\begin{equation}
H_2(\alpha)
= -\alpha \log_2(\alpha) - (1-\alpha)\log_2\bigl(1-\alpha\bigr) \ .
\end{equation}

\medskip
Consider the sum defined in
\begin{equation}
C_N(S)
\;=\;\sum_{c=1}^{\lfloor \alpha N \rfloor} \binom{N}{c} \ .
\end{equation}
The binomial coefficients $\binom{N}{c}$ are unimodal in $c$, achieving their maximum when $c$ is around $N/2$. If $\alpha \le 1/2$, the largest term in the sum up to $c = \lfloor \alpha N\rfloor$ is near $c = \lfloor \alpha N\rfloor$. If $\alpha > 1/2$, by symmetry, the main contribution is instead near the central region and we will prove that it scales to the order of $2^N$.

\medskip

\noindent \emph{(a) Dominant term and Stirling's approximation.}
Using Stirling's approximation
\begin{equation}
n! \;\sim\; \sqrt{2\pi\,n}\,\Bigl(\frac{n}{e}\Bigr)^{n}
\quad \text{as }n \to \infty \ ,
\end{equation}
for $k = \lfloor \alpha N \rfloor$, we have
\begin{align}
&\binom{N}{k}
=\frac{N!}{k!\,(N-k)!}\\
&\approx \!
\frac{\sqrt{2\pi N}\,\bigl(\tfrac{N}{e}\bigr)^{N}}
{\sqrt{2\pi k}\,\bigl(\tfrac{k}{e}\bigr)^{k}\;\sqrt{2\pi(N-k)}\,\bigl(\tfrac{N-k}{e}\bigr)^{N-k}} \ .
\end{align}

Canceling common factors, we obtain the approximation 

\begin{align}
\binom{N}{k}
\;&\approx\;
\frac{N^N}{k^k\,(N-k)^{N-k}} \nonumber\\
&\quad \times \frac{1}{\sqrt{2\pi\,k\,(N-k)/N}}\\
&=\;\frac{1}{\sqrt{2\pi\,\alpha(1-\alpha)\,N}} \nonumber\\
&\quad \times \frac{1}{\bigl(\alpha^\alpha\,(1-\alpha)^{1-\alpha}\bigr)^N} \ .
\end{align}
Noting that
$\alpha^\alpha\,(1-\alpha)^{1-\alpha} = 2^{-H_2(\alpha)}$. 
Hence, we get
\begin{equation}
\binom{N}{\alpha N}
\;\approx\;
\frac{2^{NH_2(\alpha)}}{\sqrt{2\pi\,\alpha(1-\alpha)\,N}} \ .
\end{equation}
For $\alpha = 1/2$, this is the classical
\begin{equation}
\binom{N}{N/2}
\;\sim\;\frac{2^{N}}{\sqrt{\pi N/2}} \ .
\end{equation}

\medskip

\noindent \emph{(b) Sum of binomial coefficients.}
We now distinguish the cases $0<\alpha<1/2$ and $\alpha\ge 1/2$.

If $0<\alpha<1/2$, let $k=\lfloor\alpha N\rfloor$. Then the largest term
in the sum is $\binom{N}{k}$. Moreover, for $1\le j\le k$,
\begin{equation}
\frac{\binom{N}{j-1}}{\binom{N}{j}}
\;=\;
\frac{j}{N-j+1}
\;\leq\;
\frac{k}{N-k+1} \ .
\end{equation}
Since $k/N\sim\alpha<1/2$, there exists a constant $r<1$ such that
\begin{equation}
    \frac{k}{N-k+1} \;\leq\; r \ ,
\end{equation}
for all sufficiently large $N$. Therefore the preceding terms are bounded
by a geometric series for fixed $k$
\begin{align}
\binom{N}{k}
&\;\leq\;
C_N(k)\\
&\;\leq\;
\binom{N}{k}\sum_{t=0}^{k}r^t
\;\leq\;
\frac{1}{1-r}\binom{N}{k} \ .
\end{align}
Hence $C_N(k)=\Theta\bigl(\binom{N}{k}\bigr)$. Using the Stirling estimate
from part (a), we get
\begin{equation}
C_N(\lfloor\alpha N\rfloor)
\;=\;
\Theta\!\left(
\frac{2^{N H_2(\alpha)}}{\sqrt{N}}
\right) \ .
\end{equation}

If $\alpha=1/2$, then the sum contains one half of the binomial mass, up to
the central coefficient, and hence
\begin{equation}
C_N(\lfloor N/2\rfloor)
\;=\;
\Theta(2^N) \ .
\end{equation}

If $\alpha>1/2$, then the sum contains the previous half-mass case and is
still bounded above by the full binomial sum:
\begin{equation}
    \sum_{k=0}^{N}\binom{N}{k}\;=\;2^N \ .
\end{equation}
Hence
\begin{equation}
C_N(\lfloor\alpha N\rfloor)
\;=\;
\Theta(2^N),
\quad \frac12<\alpha<1 \ .
\end{equation}
Thus the tight scaling is
\begin{equation}
C_N(S)
=
\begin{cases}
\Theta\!\left(
\dfrac{2^{N H_2(\alpha)}}{\sqrt{N}}
\right),
& 0<\alpha<\frac12 \\[1em]
\Theta(2^N),
& \frac12\le \alpha<1
\end{cases} \ .
\end{equation}

This completes the proof of both claims in the theorem.
\end{proof}

\end{document}